\pdfoutput=1
\documentclass[letterpaper, 10 pt, journal, twoside]{IEEEtran}

\usepackage{times}
\usepackage[pdftex]{graphicx}
\usepackage{subfigure}
\usepackage{amsmath,amssymb,amsopn,amstext,amsfonts}
\usepackage{cancel}
\usepackage[space]{cite}
\usepackage{soul}
\usepackage{cuted}
\usepackage{lipsum,multicol}
\usepackage{stfloats}
\everymath{\displaystyle}
\usepackage{booktabs}
\usepackage{threeparttable}
\usepackage{amsthm}
\usepackage{xfrac}
\usepackage{balance}
\usepackage{color}
\usepackage{mathtools}

\usepackage{tabularx}
\usepackage{bm}
\usepackage{diagbox}
\usepackage{float}
\usepackage{epstopdf}
\usepackage{url}
\usepackage{multirow}
\usepackage{multicol}
\usepackage{tikz}
\usepackage[linkcolor=black,citecolor=black,urlcolor=black,colorlinks=true]{hyperref}
\usepackage{enumitem}
\usepackage[ruled,vlined,linesnumbered]{algorithm2e}
\usepackage{pifont}
\usepackage{siunitx}
\usepackage{pdfpages}
\DeclareMathAlphabet\mathbfcal{OMS}{cmsy}{b}{n}

\let\oldFootnote\footnote
\newcommand\nextToken\relax

\renewcommand\footnote[1]{%
    \oldFootnote{#1}\futurelet\nextToken\isFootnote}
\newcommand\isFootnote{%
    \ifx\footnote\nextToken\textsuperscript{,}\fi}

\newtheorem{theorem}{Theorem}

\newtheorem{lemma}{Lemma}
\newtheorem{assumption}{Assumption}
\newtheorem{remark}{Remark}

\definecolor{celadon}{rgb}{0.78, 0.93, 0.80}
\pagecolor{celadon}
\nopagecolor
\soulregister\cite7
\soulregister\citep7
\soulregister\citet7
\soulregister\ref7
\soulregister\it7
\soulregister\pageref7

\bibliographystyle{IEEEtran}

\usepackage{arydshln}
\DeclareGraphicsExtensions{.pdf,.png,.jpg,.eps}

\DeclareMathOperator{\atan}{atan}

\IEEEoverridecommandlockouts


\title{Occupancy Grid Mapping without Ray-Casting for High-resolution LiDAR Sensors}

\author{Yixi Cai, Fanze Kong, Yunfan Ren, Fangcheng Zhu, Jiarong Lin, Fu Zhang

    \thanks{Manuscript received February 5, 2023; revised July 17, 2023; accepted September 29, 2023. This work is supported by the University Grants Committee of Hong Kong General Research Fund under Project 17206421 and DJI Donation. \textit{(Yixi Cai and Fanze Kong contributed equally to this work.)} \textit{(Corresponding author: Fu Zhang.)}}
    \thanks{The authors are with Mechatronics and Robotic Systems (MaRS) Laboratory, Department of Mechanical Engineering, University of Hong Kong, Hong Kong SAR, China. (email: yixicai@connect.hku.hk; kongfz@connect.hku.hk; renyf@connect.hku.hk; zhufc@connect.hku.hk; jiarong.lin@connect.hku.hk; fuzhang@hku.hk)}
}%

\begin{document}
\maketitle
\begin{abstract}

	Occupancy mapping is a fundamental component of robotic systems to reason about the unknown and known regions of the environment. This article presents an efficient occupancy mapping framework for high-resolution LiDAR sensors, termed D-Map. The framework introduces three main novelties to address the computational efficiency challenges of occupancy mapping. Firstly, we use a depth image to determine the occupancy state of regions instead of the traditional ray-casting method. Secondly, we introduce an efficient on-tree update strategy on a tree-based map structure. These two techniques avoid redundant visits to small cells, significantly reducing the number of cells to be updated. Thirdly, we remove known cells from the map at each update by leveraging the low false alarm rate of LiDAR sensors. This approach not only enhances our framework's update efficiency by reducing map size but also endows it with an interesting decremental property, which we have named D-Map. To support our design, we provide theoretical analyses of the accuracy of the depth image projection and time complexity of occupancy updates. Furthermore, we conduct extensive benchmark experiments on various LiDAR sensors in both public and private datasets. Our framework demonstrates superior efficiency in comparison with other state-of-the-art methods while maintaining comparable mapping accuracy and high memory efficiency. We demonstrate two real-world applications of D-Map for real-time occupancy mapping on a handle device and an aerial platform carrying a high-resolution LiDAR. In addition, we open-source the implementation of D-Map on GitHub to benefit society: \href{https://github.com/hku-mars/D-Map}{\tt github.com/hku-mars/D-Map}.
	
\end{abstract}

\begin{IEEEkeywords}
	Occupancy Mapping, LiDAR Perception, Range Sensing.
\end{IEEEkeywords}

\section{Introduction}\label{sec:intro}
\IEEEPARstart{I}{n} recent years, advancements in light detection and ranging (LiDAR) sensors have led to the commercialization of lightweight, low cost, and high accuracy 3D LiDARs, raising tremendous popularity in various applications such as robotics\cite{gao2019flying,ren2022bubble,ren2022online,zhu2022swarmlio}, autonomous driving\cite{leonard2008perception,levinson2011towards,li2020deep}, 3D reconstruction\cite{zhang2014loam,lin2023immesh, liu2023HBA}, etc. Over the past decade, a clear trend has emerged to develop 3D LiDARs with a smaller size, longer detection range, and higher resolution, approaching image-level quality\cite{li2020lidartrend}. This trend has not only opened up new possibilities for deploying LiDARs in diverse applications but has also necessitated the development of new techniques to exploit the potential of LiDAR-based applications.

\begin{figure}[t]
	\setlength\abovecaptionskip{-0.1\baselineskip}
	\centering
	\includegraphics[width=\linewidth]{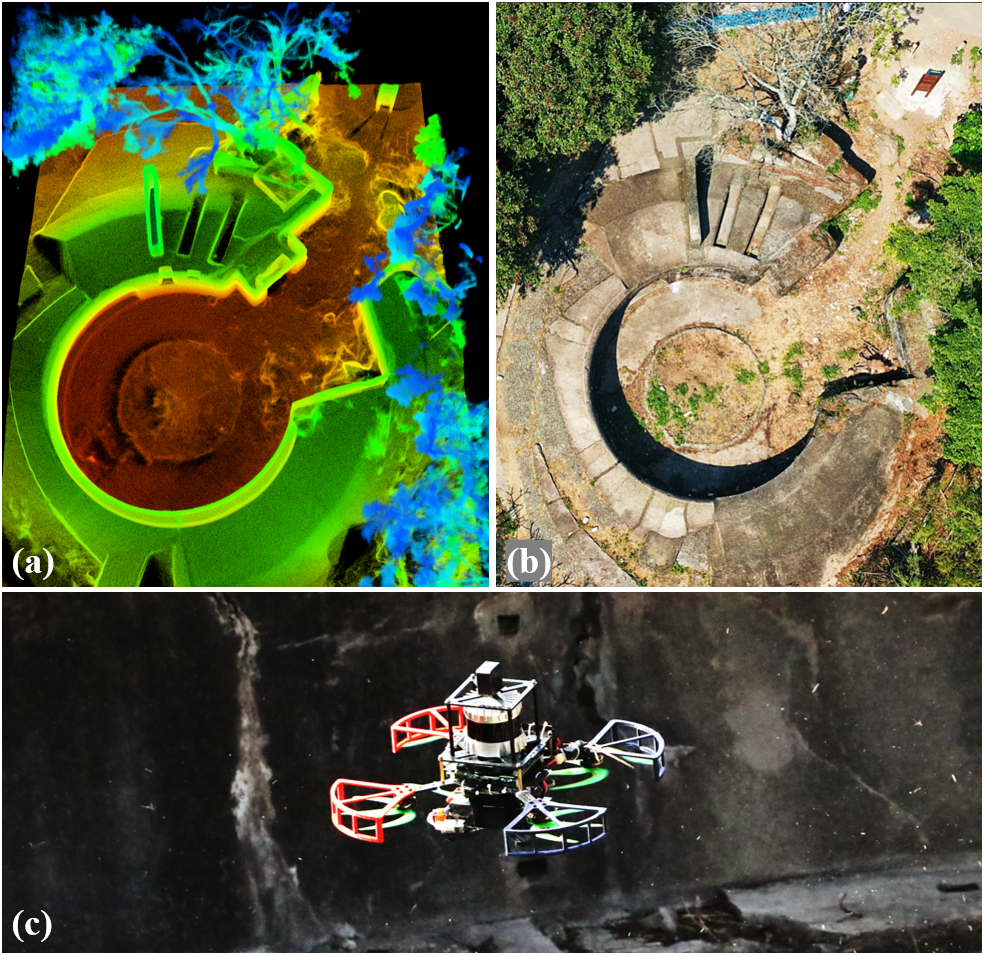}
	\caption{Our proposed framework D-Map served as a real-time high-resolution occupancy mapping module for an autonomous UAV exploration task in an ancient fortress. (a) The high-fidelity point cloud collected by UAV. (b) A bird-view of the scene. (c) The aerial platform carried a 128-channel LiDAR (OS1-128) to conduct the exploration task. The accompanying video of this paper is available on Youtube:  \href{https://youtu.be/m5QQPbkYYnA}{\tt youtu.be/m5QQPbkYYnA}.}
	\label{fig:coverfigure} 
\end{figure} 

Occupancy grid mapping provides an efficient means for autonomous systems to navigate in unknown environments, which is crucial in a variety of applications such as obstacle avoidance\cite{lopez2017aggressive,kong2021avoiding}, path planning\cite{liu2016high,tordesillas2019faster}, and autonomous exploration\cite{zhang2020fsmi,zhou2021fuel,tabib2021autonomous}. When adapted to LiDARs with increasing performance, occupancy grid mapping is faced with additional challenges in efficiency, especially in the two core steps: ray-casting and occupancy state updates.

Ray-casting is challenging because LiDARs provide increasingly dense depth measurements (e.g., over one million points per second) with long ranges (e.g., over \SI{300}{\meter}). Therefore, the number of rays to be processed increases with the number of points. Secondly, the number of cells traversed by a single ray is affected by the sensor's detection range, with LiDARs traversing dozens of times more cells than depth cameras with smaller detection range (e.g., less than \SI{5}{\meter}). Consequently, the tremendous amount of cells results in a costly computation load. Finally, the dense LiDAR measurements can lead to redundant visits to the same cells in ray-casting, which can consume unnecessary computation resources in the subsequent occupancy state updates.

The second challenge in LiDAR-based occupancy mapping arises from the need for high-resolution occupancy maps to fully exploit the high accuracy of LiDAR sensors. This challenge involves the selection of map structures to achieve high-resolution occupancy mapping where a trade-off between computational and memory efficiency has to be balanced. The two commonly used map structures are grid-based\cite{Alberto1996} and tree-based\cite{hornung2013octomap}. Grid-based maps offer highly efficient updates in $\boldsymbol{\mathcal{O}}(1)$ time complexity but suffer from extensive memory consumption when applied in large-scale environments or with high resolutions. Tree-based maps are more memory-efficient than grid-based maps, but the trade-off is the higher computational complexity when updating the tree. The update efficiency of a tree-based map structure is directly influenced by the tree height, which is determined by both the mapping environment and map resolution.
 
In addition to the challenges mentioned above, some essential features of LiDAR sensors have not yet been fully utilized in existing occupancy mapping approaches, such as the low false alarm rate which provides high confidence in identifying free and occupied space. The existing occupancy mapping approaches commonly utilize occupancy probabilities to handle the false detection of depth measurements. Nevertheless, in LiDAR-based occupancy mapping, the low false alarm rate  {(e.g., three out of a million points \cite{manual2020avia})} offers an opportunity to enhance efficiency by directly eliminating known space from the map without requiring any probability updates.

In this work, we present a novel mapping framework that effectively addresses the aforementioned issues in LiDAR-based occupancy mapping. Our contributions can be summarized as follows:
\begin{itemize}
    \item We propose an occupancy state determination method based on depth image projection to alleviate the computation load in the traditional ray-casting technique. This projection-based approach enables occupancy state determination of cells at any size, allowing subsequent efficient updates in large-scale environments.
    \item We present a novel on-tree update strategy for updating occupancy states based on a hybrid map structure, providing a superior balance between computation and memory efficiency. The hybrid map structure stores unknown space on an octree, which enables a memory-efficient representation for the large unknown space, while the occupied space is stored on a hashing grid map. In terms of efficiency, the proposed strategy allows occupancy state determination of large cells on an octree, thereby avoiding unnecessary updates on small cells and increasing efficiency. 
    \item We leverage the low false alarm rate of LiDAR measurements to directly remove cells with determined states (i.e., occupied or free) at each update. This approach renders our map structure a decremental property, for which we term our framework as D-Map, providing higher computational efficiency and less memory usage.
    \item We conduct an in-depth analysis of the accuracy of the proposed occupancy state determination method and the time complexity of updates and queries in D-Map. Specifically, we derive an analytical function to quantify the accuracy loss relative to depth image resolution. The time complexity analysis of updates on D-Map provides theoretical support to our superior performance over state-of-the-art methods that rely on ray-casting.
    \item We make the implementation of D-Map available on Github: \href{https://github.com/hku-mars/D-Map}{\tt github.com/hku-mars/D-Map} to promote the reproducibility and further development of our work. 
\end{itemize}

The remainder of the paper is organized as follows. Section \ref{sec:related_work} provides an overview of related works in the field of occupancy mapping. We then present the complete mapping framework and the details of each key component in Sections \ref{sec:overview}, \ref{sec:method}, and \ref{sec:mapping}, respectively. Section \ref{sec:benchmark} presents the benchmark experiments and ablation studies conducted on open datasets. In Section \ref{sec:experiment}, we demonstrate two real-world experiments, followed by a discussion in Section \ref{sec:discussion}. Finally, we conclude this article in Section \ref{sec:conclusion}.

\section{Related Works}\label{sec:related_work}
In this section, we review the previous research on occupancy mapping and discuss their update methods. 

\subsection{Occupancy Mapping Approaches}
We categorize the occupancy mapping approaches into two classes: continuous maps and discrete maps, based on their assumptions of modeling the environments. 

Continuous maps assume an implicit correlation of locality in space and model environments through continuous occupancy distribution. In the early years, Thrun \textit{et al.} introduced the concept of learning inverse sensor models to generate local occupancy maps by means of sampling\cite{thrun2002probabilistic}. In more recent years, O'Callaghan \textit{et al.} proposed using Gaussian Process regression to learn a continuous representation of 2-dimensional environments, which has since been extended to 3-dimensional environments using Gaussian mixture models or sparse Gaussian processes to reduce computational load during training \cite{o2012gaussian, kim2012GMM, kim2015gpmap}. However, the time complexity of these methods is often prohibitive for real-time applications, with a computational complexity of $\boldsymbol{\mathcal{O}}(N^3)$ in the number of depth measurements $N$. Therefore, considerable efforts have been devoted to improving the update efficiency of these methods \cite{hilbertmap, incrementalhilbertmap, o2018variable, doherty2019BGK}. Unlike previous approaches maintaining dense occupancy states, Duong \textit{et al.} developed a sparse Bayesian formulation for occupancy mapping using a sparse set of relevance vectors\cite{Duong2022Bayesian}. While the aforementioned methods can be applied to LiDAR sensors, there is a substantial body of research focused on learning-based occupancy mapping using image sequences, including Occupancy Networks\cite{occupancynetwork}, Neural Radiance Fields (NeRF)\cite{mildenhall2021nerf}, DeepSDF\cite{park2019deepsdf}, Semantic Mapping\cite{SemanticOctree, 2020Kimera}, NICE-SLAM\cite{zhu2022nice}, among others. 

As opposed to continuous maps, occupancy grid mapping approaches assume independent occupancy at different locations, resulting in a discrete representation of environments using grids. Historically, a uniform grid map structure was imposed to discretize the environments and represents the occupancy probabilities independently\cite{Alberto1996,martin1996robot,roth1989building}. However, this approach is impractical for generating high-resolution maps or working in large-scale environments for its significant memory consumption. To address this limitation, tree-based map structures such as quadtree\cite{hunter1979quadtree} and octree\cite{jackins1980octree} were applied in occupancy grid mapping approaches with great effect. These structures recursively divide space into equal subspaces for updates and merge subspaces with equivalent states for a more compact representation\cite{payeur1997probabilistic,hornung2013octomap}. However, the updates on the tree-based maps are much more expensive. A voxel hashing technique \cite{niessner2013hashmap} has recently been adapted to occupancy mapping\cite{zhou2020ego,zhou2023racer}. Though alleviating the memory consumption in uniform grids, the memory consumption remains prohibitive to high-resolution maps and large-scale environments. The use of discrete representations of environments has also been applied in Euclidean Signed Distance Field (ESDF), an alternative approach to occupancy mapping \cite{voxblox, fiesta}. However, the construction of ESDF heavily relies on ray-casting and encounters similar challenges to other existing occupancy mapping techniques.

 While the continuous mapping approaches provide dense occupancy maps that can reason about measured and unmeasured areas, concerns remain about the reliability of the inferred regions, particularly in safety-critical robotics applications. Besides, continuous mapping approaches face significant challenges in real-time ability due to the complicated training and querying process, as integration over non-trivial space is required to update and query on a continuous map which necessitates timely numerical evaluation. Compared with the continuous mapping approaches methods, occupancy grid mapping is preferable in robotic applications with limited computation resources because of its higher update efficiency resulting from less complex map representation. Additionally, occupancy grid mapping is regarded as a more reliable technique, as it updates the occupancy map solely based on actual depth measurements without any inferences. Our D-Map is one kind of occupancy grid mapping approach. It achieves higher computational efficiency than existing grid-based approaches while maintaining comparable memory consumption with tree-based methods.

\subsection{Update Methods}
Regardless of different map representations, various research has been conducted to improve the efficiency of map updating, most of which focuses on alleviating the computation load of ray-casting. Wurm \textit{et al.} present a hierarchical structure of octrees with multiple resolutions, allowing not complete but adequate resolution in different sub-maps depending on objects to reduce the computation load in ray-casting\cite{wurm2011hierarchies}. Similarly, \cite{einhorn2011finding} propose an adaptive resolution mapping method based on statistical measurements. However, the resultant maps generated from this method are affected by the parameters of adaptive resolution, which may differ from the original results. In Octomap\cite{hornung2013octomap}, a batch-based method is utilized to avoid redundant cells resulting from ray-casting, which effectively reduces the effort in subsequent tree updates. Besides, a clamping policy is used in Octomap, initially proposed to handle dynamic environments\cite{yguel2008update}, allowing the acceleration of map updates by pruning the octree. Duberg \textit{et al.} proposed two possible approaches to simplify the ray-casting process, including downsampling point clouds to map resolution and ray marching with adaptive steps. However, both of these approaches introduce accuracy loss and require handcraft parameters\cite{duberg2020ufomap}. In contrast, motivated by the idea of coherent rays\cite{wald2006ray}, Kwon \textit{et al.} combine rays traversing equivalent cells into a super ray to reduce the number of rays in ray-casting\cite{kwon2019superray} while maintaining full accuracy. Besides super rays, the number of cells to be traversed is reduced by culling regions. However, these two techniques provide less benefit at high resolution due to the ineffectiveness of merging rays into a super ray and the inefficiency of culling region construction.

In comparison with the aforementioned update approaches, D-Map discards the timely ray-casting for occupancy mapping. Alternatively, we design an occupancy state determination method based on depth image projection. We develop an on-tree update strategy for efficient map updates. This strategy determines the occupancy states of cells at various resolutions by projecting cells to the depth image, which is more reliable compared to the adaptive resolution mapping approach\cite{einhorn2011finding} that relies on handcraft parameters. Moreover, unlike existing occupancy mapping methods, D-Map does not require occupancy probability updates. Instead, we assume the environment is static and directly remove known space (i.e., free and occupied) from the tree map by leveraging the low false alarm rate of LiDAR sensors. This approach results in a decreasing size of the map during the update process, thus further enhancing the efficiency of D-Map.

\section{Overview}\label{sec:overview}
\begin{figure*}[t]
	\setlength\abovecaptionskip{-0.1\baselineskip}
	\centering
	\includegraphics[width=0.95\textwidth]{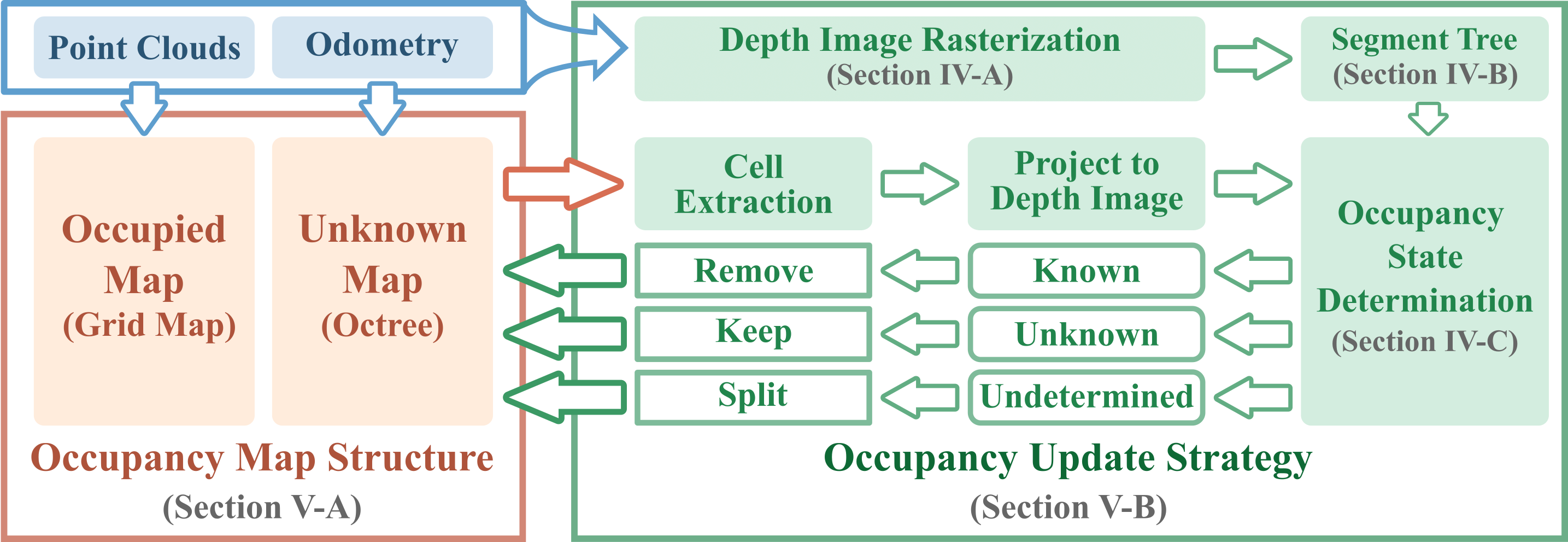}
	\caption{{The framework overview of D-Map. The blue block shows the input to D-Map, including the point clouds and the corresponding sensor odometry. The orange block is the occupancy map structure of D-Map, which is composed of a hashing grid map for maintaining occupied space and an octree for maintaining unknown space. The occupancy update strategy is presented in the green block, which extracts the cells inside the sensing area on the octree and conducts operations depending on the occupancy state determination method using a depth image. }}
	\label{fig:overview} 
 \vspace{-0.4cm}
\end{figure*} 

Figure \ref{fig:overview} presents an overview of the proposed D-Map framework. The map structure delineated in the orange block is explained in Section~\ref{subsec:map}. It consists of two parts: the occupied map and the unknown map. The green block represents the pipeline of the occupancy update strategy. At each update, a depth image is rasterized from the incoming point clouds at the sensor pose (see Section~\ref{subsec:proj}). Subsequently, a 2-D segment tree is constructed on the depth image to enable efficient occupancy state determination (see Section~\ref{subsec:segtree} and Section~\ref{subsec:determination}). The entire procedure to update the unknown map is described in Section~\ref{subsec:update} and summarized as follows. The cell extraction module retracts the unknown cells on the octree from the largest to the smallest size, projects them to the depth image, and determines their occupancy states. The cells determined as known are directly removed from the map, while the unknown ones remain, and the undetermined ones are split into smaller cells for further occupancy state determination. This coarse-to-find process facilitates updates on large cells directly without queries each small cell.  Moreover, the removal of known cells endows our framework with a decremental property, providing high efficiency in both computation and memory. 

\section{Occupancy State Determination on Depth Image}\label{sec:method}
This section describes how to determine the occupancy states on a depth image rasterized from incoming point clouds. 
\subsection{Depth Image Rasterization}\label{subsec:proj}
In preparation for occupancy state determination, the point cloud captured by a LiDAR sensor is rasterized into a depth image at the current sensor pose. To ensure the accuracy of state determination, the depth image resolution should be sufficiently small so that the projected area of a cell from the map to the depth image is larger than one pixel. As illustrated in Fig.~\ref{fig:depthres}, we determine the depth image resolution $\psi_{\mathtt{map}}$ relative to the map resolution $d$ and LiDAR's detection range $R$ using the following equation:
\begin{equation}\label{eq:res_map}
\psi_{\mathtt{map}} = 2\arcsin(\frac{d}{2R}) \approx \frac{d}{R}
\end{equation}
However, high-resolution maps would result in a high-resolution depth image of enormous size, with many empty pixels due to the much smaller number of point clouds than the size of the depth image. To address this issue, we bound the depth image resolution by the LiDAR angular resolution, which is the minimum angle between two laser pulses emitted and received in a rotating manner. Specifically, we define the standard depth image resolution $\psi_I$ as
\begin{equation}\label{eq:res}
\psi_I = \max\{\psi_{\mathtt{map}}, \psi_{\mathtt{lidar}}\}\approx \max\{\frac{d}{R}, \psi_{\mathtt{lidar}}\} \
\end{equation}
where $\psi_{\mathtt{lidar}}$ is the angular resolution of the LiDAR. Note that we do not distinguish between the vertical and horizontal angular resolution for simplicity of definition. Moreover, we keep the minimum depth value when neighboring points are projected to the same pixel.
\begin{figure}[t]
	\setlength\abovecaptionskip{-0.1\baselineskip}
	\centering
	\includegraphics[width=0.9\linewidth]{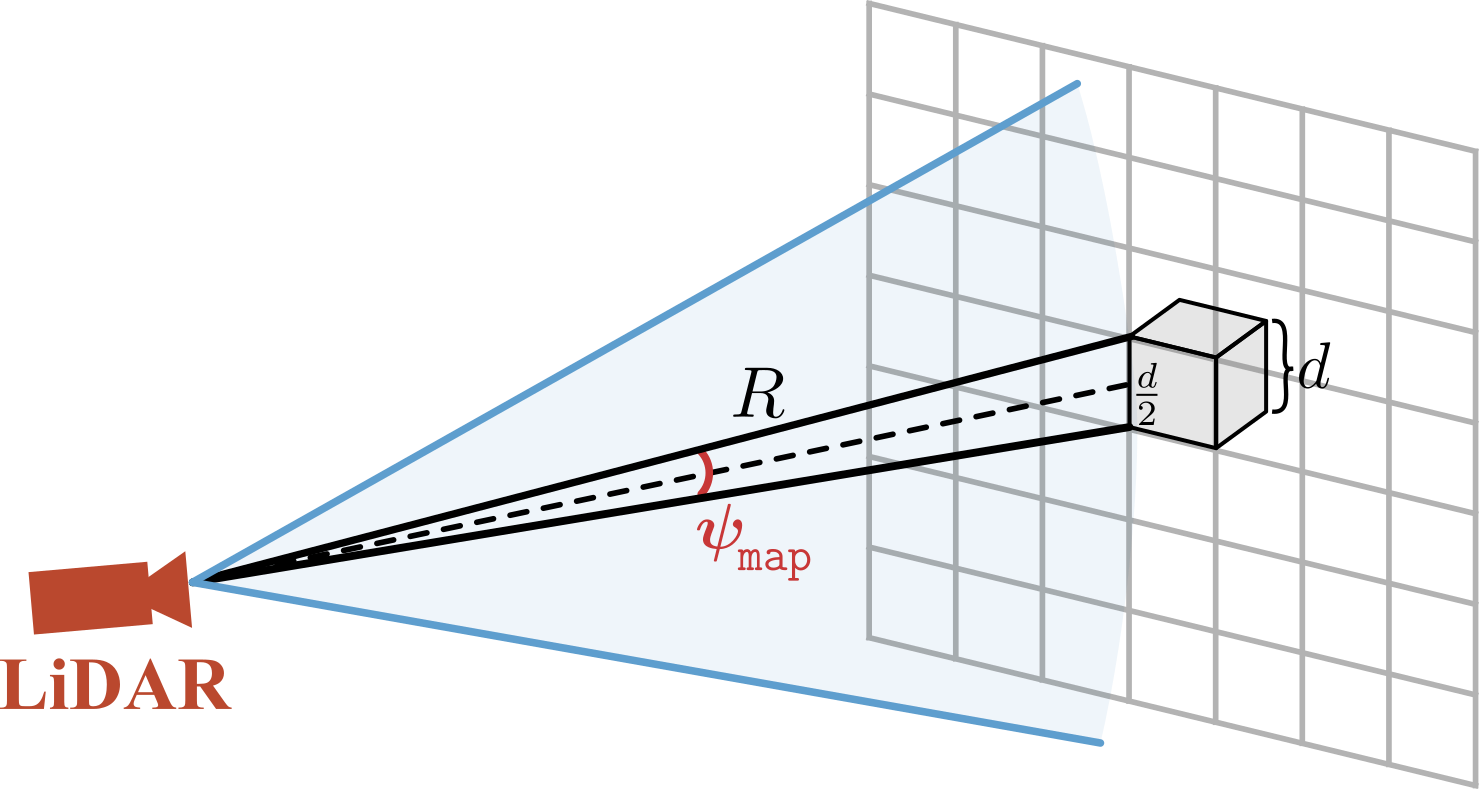}
	\caption{This figure illustrates the spatial relationship among the map resolution $d$, the detection range $R$, and the depth image resolution $\mathrm{\psi_{\mathtt{map}}}$.}
	\label{fig:depthres} 
 \vspace{-0.3cm}
\end{figure} 
 
\subsection{2-D Segment Tree}\label{subsec:segtree}
	\begin{figure}[t]
		\setlength\abovecaptionskip{-0.1\baselineskip}
	\centering
	\includegraphics[width=0.95\linewidth]{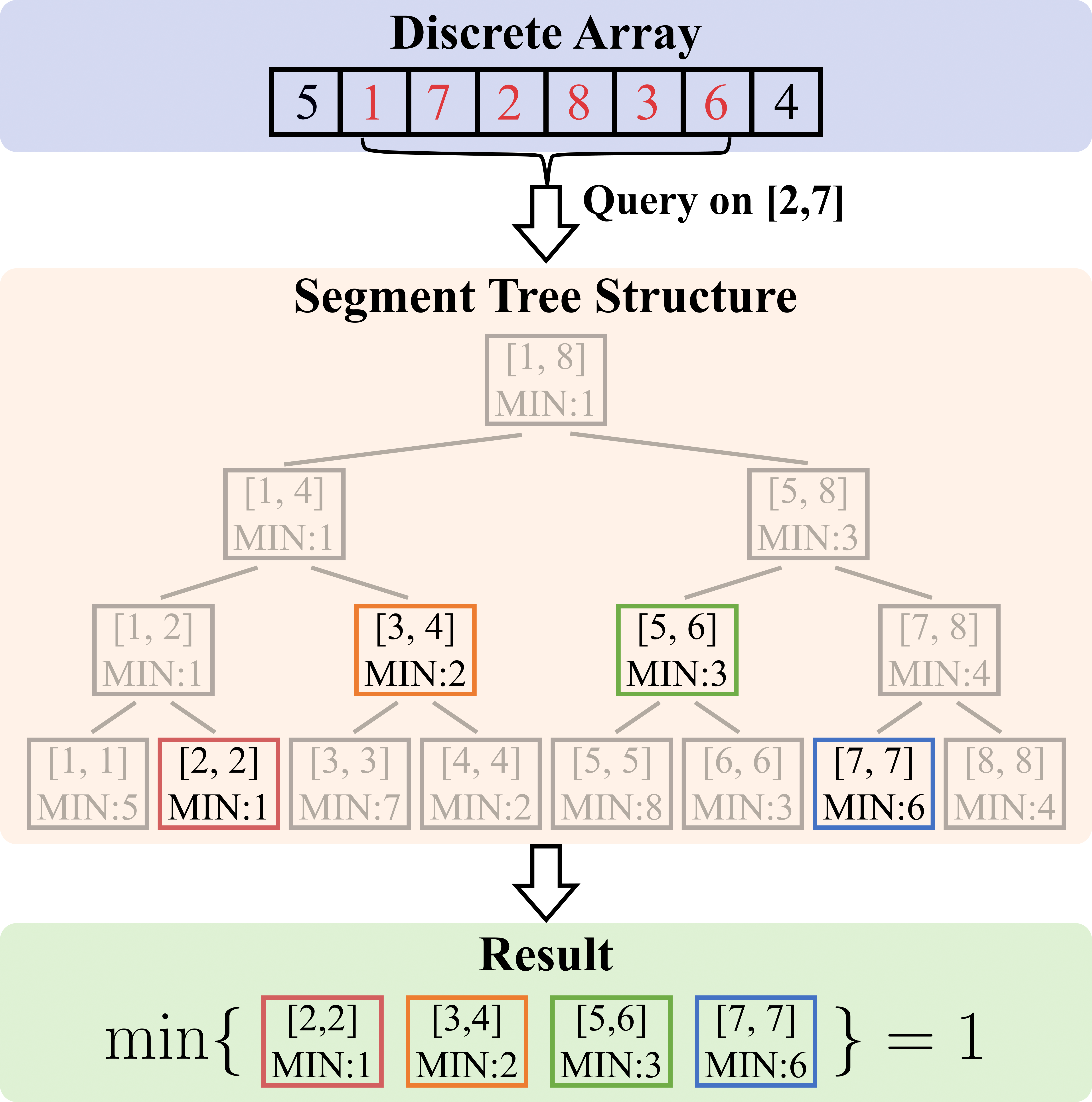}
	\caption{This figure illustrates an example of a fast query of the minimum value in the pixel range of $[2,7]$ on a 1-D segment tree. Starting from the root of the segment tree, the range query searches along the tree recursively until the current node range is completely covered by the queried range, where the minimum value of the node range that has been saved on the node during the tree construction will be returned. In this example, the range $[2,7]$ leads to four nodes representing the range of $[2,2]$, $[3,4]$, $[5,6]$, and $[7,7]$, respectively. The minimum value of the range is efficiently obtained from these four nodes instead of counting the six elements in the array.}
	\label{fig:seg}
 \vspace{-0.4cm}
	\end{figure} 

To determine the occupancy state of a cell in the map, a two-step process is employed, whereby the cell is first projected onto the depth image, followed by a comparison of the projected depth against the minimum and maximum depth values of the corresponding area. Since the projected area of a cell on the depth image varies with the cell location, a 2-D segment tree structure is employed to expedite efficient queries of the minimum and maximum values on the depth image, as detailed below.

A segment tree is a perfectly balanced binary tree that efficiently provides range queries by representing a set of intervals \cite{bentley1980segtree}. Figure~\ref{fig:seg} describes the process of querying the minimum value by a 1-D segment tree. The segment tree is constructed by recursively splitting the array in half until each node contains a single element. As each node on the segment tree represents an interval of the array, the summarized information of the values in the interval, such as the minimum and maximum, is preprocessed during the construction procedure to accelerate the subsequent query. When querying, the segment tree retrieves a minimal representation of the queried interval using a subset of nodes  (colored nodes in Fig. \ref{fig:seg}). The result is obtained by summarizing information from the retrieved nodes with fewer operations than direct queries. The time complexity of query on a 1-D segment tree is $\boldsymbol{\mathcal{O}}(\log N)$ where $N$ is the number of elements in the discrete array, while direct query leads to a time complexity of $\boldsymbol{\mathcal{O}}(N)$.

The approach to extending a 1-Dimensional segment tree to a 2-Dimensional structure involves constructing a ``segment tree of segment trees", as proposed in \cite{vaishnavi1982segtree2D}. The segment tree in the outer layer splits the 2-D array by row, and on each node of the outer segment tree, a 1-D inner segment tree is constructed to maintain the column information on the covered rows. The query on a 2-D segment tree first searches the outer tree for the nodes representing rows covered by the queried region and then traverses the inner segment trees on the corresponding nodes to retrieve the covered columns. Finally, the result over the queried region is summarized from the information stored on retrieved nodes. The time complexity of a 2-D segment tree to query on a 2-D array of size $N\times M$ is $\boldsymbol{\mathcal{O}}(\log N \log M)$, while direct queries lead to a time complexity of $\boldsymbol{\mathcal{O}}(N^2)$.

Given a depth image rasterized from point clouds, we build a 2-D segment tree to maintain the minimum and maximum depth values on each tree node, denoted as $\mathtt{dMin}$ and $\mathtt{dMax}$, respectively. Additionally, we keep track of the number of pixels occupied by point clouds within the covered area of each node, denoted as $\mathtt{dSum}$.  

\subsection{Occupancy State Determination}\label{subsec:determination}
\begin{algorithm}[t]
	\SetAlgoLined
	\SetKwInOut{Input}{Input}
	\SetKwInOut{Output}{Output}
	\SetKwInOut{Param}{Params}
	\SetKwFunction{query}{$\mathcal{T}$$.\mathtt{query}$}
	\SetKwFunction{determine}{$\mathtt{DetermineOccupancy}$}
	\SetKwFunction{proj}{$\mathtt{Projection}$}
	\SetKwFunction{cover}{$\mathtt{Coverage}$}
	\SetKwFunction{return}{$\mathtt{return}$}
	\Param{Completeness threshold $\mathtt{\varepsilon}$}
	\Input{Grid Center $\mathtt{C}$, Grid Size $\mathtt{L}$, LiDAR Pose $\mathtt{T}$,\\ 2D Segment Tree $\mathcal{T}$\\}
	\Output{State $\mathtt{S}$}
	
	\SetKwProg{Fn}{Function}{}{}
	\Fn{\determine{$\mathtt{T}$,$\mathtt{C}$,$\mathtt{L}$,$\mathcal{T}$}}{
		$r,\theta,\phi \leftarrow$ \proj{$\mathtt{T}$,$\mathtt{C}$};\\
		$\mathtt{BoxMin}$ $= r-L/2$, $\mathtt{BoxMax}$ $= r+L/2$;\\    
		$\mathtt{xL,xR,yL,yR}\leftarrow$\cover{$r,\theta,\phi,\mathtt{L}$};\\	
		$\mathtt{ProjPixels}$ = $(\mathtt{yR}-\mathtt{yL}+1)(\mathtt{xR}-\mathtt{xL}+1)$;\\			
		$\mathtt{dMin,dMax,dSum} \leftarrow$ \query{$\mathtt{xL,xR,yL,yR}$};\\
            $\alpha = \mathtt{dSum}/\mathtt{ProjPixels}$;\\
		\lIf{$\mathtt{dSum}==0$}{\return{$\mathtt{Unknown}$}}
		\eIf{$\alpha>\mathtt{\varepsilon}$}{
			\eIf{$\mathtt{dMax}$$<$$\mathtt{BoxMin}$}{
				$\mathtt{S}\leftarrow \mathtt{Unknown}$;
			}{
				\eIf{$\mathtt{dMin}$$>$$\mathtt{BoxMax}$}{
					$\mathtt{S}\leftarrow \mathtt{Unknown}$;
				}{
					$\mathtt{S}\leftarrow \mathtt{Undetermined}$;
				}
			}
		}{	\eIf{$\mathtt{dMax}$$<$$\mathtt{BoxMin}$}{
				$\mathtt{S}\leftarrow \mathtt{Unknown}$;
			}{
				$\mathtt{S}\leftarrow \mathtt{Undetermined}$;
			}		
		}
		\return{$\mathtt{S}$};
	}
	\textbf{End Function}
	\caption{Occupancy State Determination}
	\label{alg:occupancy}
\end{algorithm}
\begin{figure*}[t]
	\setlength\abovecaptionskip{-0.1\baselineskip}
	\centering
	\includegraphics[width=\textwidth]{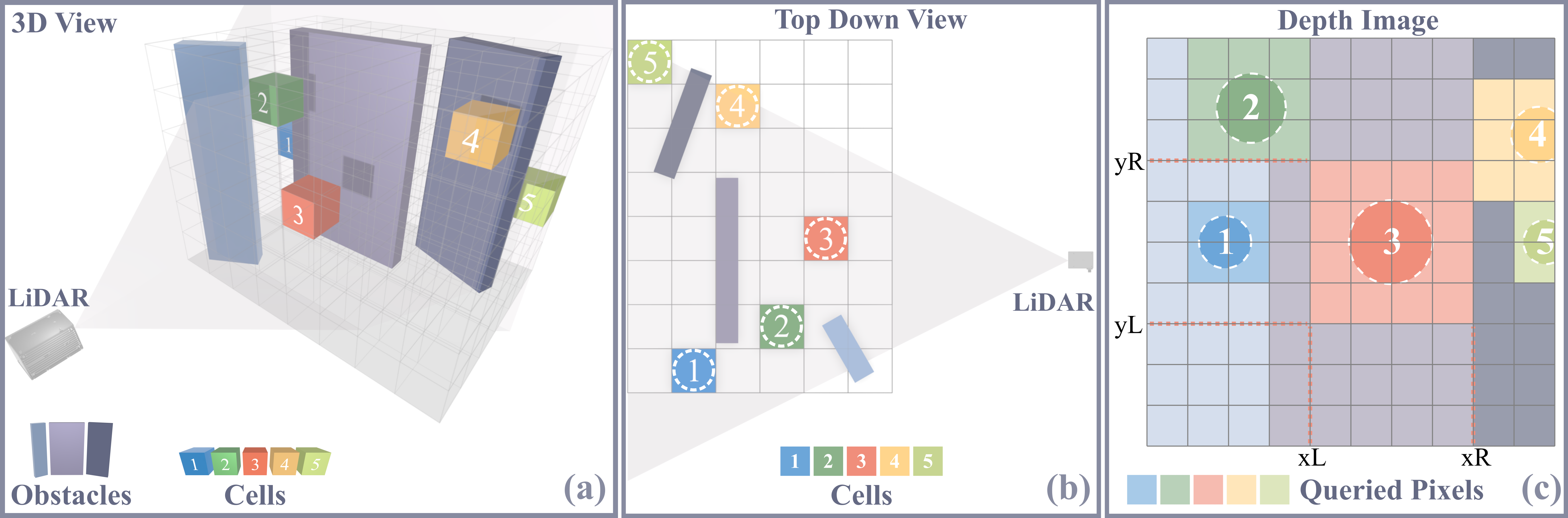}
	\caption{ This figure demonstrates an example of occupancy state determination on five cells in the map. (a) The 3D view shows the relative position of the five cells with respect to (w.r.t.) the LiDAR and the objects in the environment. (b) The top-down view helps to understand the occlusion between the cells and the objects when seeing from LiDAR. (c) The cells are projected to the depth image by their circumsphere radius, after which the projected areas are queried for the depth values in the depth image, which are finally used to determine the cells' occupancy states. }
	\label{fig:states} 
 \vspace{-0.2cm}
\end{figure*} 
\begin{figure}[t]
	\setlength\abovecaptionskip{-0.1\baselineskip}
	\centering
	\includegraphics[width=\linewidth]{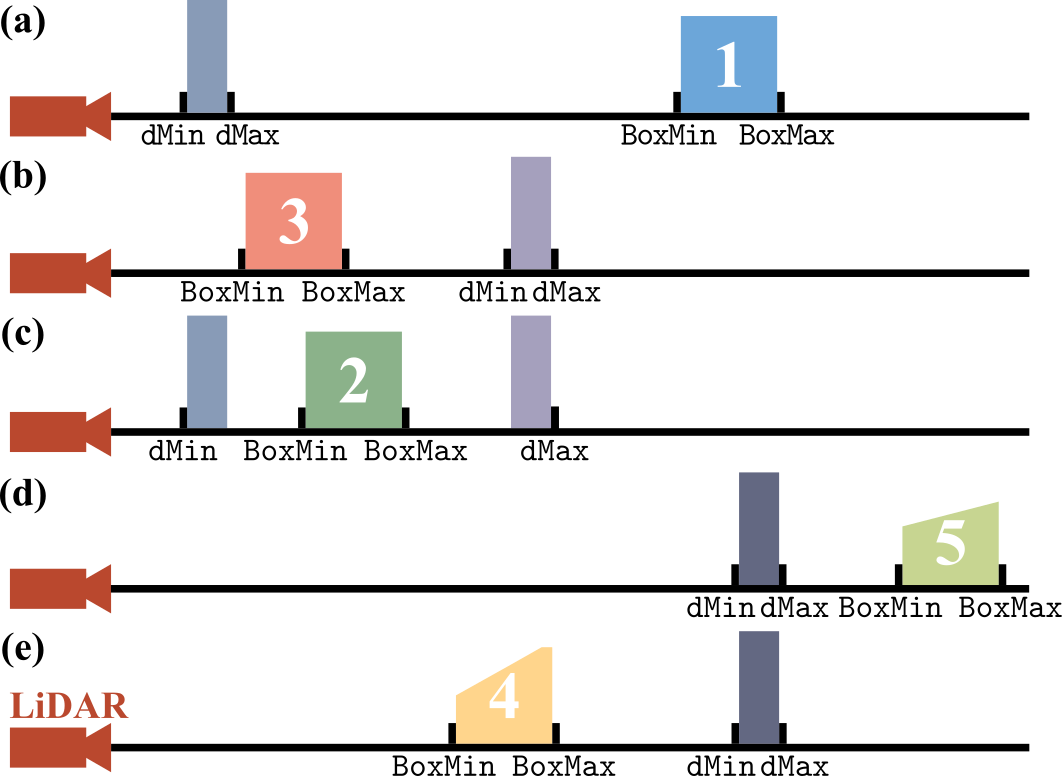}
	\caption{The relative position between the cells and objects in the environments, along the one-pixel direction of the depth image, is determined by comparing the depth range of the cell (represented by the minimum depth $\mathtt{BoxMin}$ and maximum depth $\mathtt{BoxMax}$) with the depth range on the depth image (represented by the minimum depth ($\mathtt{dMin}$ and maximum depth $\mathtt{dMax}$). Grids 1-3 completely locate inside the LiDAR's sensing area, while Grids 4-5 are partially inside. }
	\label{fig:relative} 
 \vspace{-0.5cm}
\end{figure} 

We introduce the principle of our method for determining the occupancy state of a cell using five cells as an example, depicted in Fig.~\ref{fig:states} and numbered from 1 to 5. We commence by classifying the cells based on whether they are completely located inside the LiDAR's sensing area. As illustrated in the top-down view in Fig.~\ref{fig:states}(b), Grid 1, Grid 2, and Grid 3 are entirely located within the sensing area, while Grid 4 and Grid 5 only have part of them inside. Among the cells completely inside, Grid 3 is determined as known since it is situated in front of all objects in the observed environment; Grid 1 is determined as unknown due to its location behind the objects. The occupancy state of Grid 2 remains undetermined since part of it lies in front of the objects while the other part lies behind. Regarding the cells with part of them inside the sensing area, Grid 5 is determined as unknown because it is located behind the objects. Though lying in front of the object, the occupancy state of Grid 4 remains undetermined owing to its location not completely inside.

Following the principles above, D-Map projects the cells that are either inside or intersected with the LiDAR's sensing area onto the current depth image rasterized from recent point clouds which represent the objects in the environment. The relative location between the cells and the objects is determined by comparing their depth values. The procedure of occupancy state determination on the depth image is described in Alg. \ref{alg:occupancy} and explained below.

In consideration of consistency and computational efficiency, we project a cell to the depth image by its inscribed circle, as shown in Fig. \ref{fig:states} (b) and (c). The cell center, denoted as $\mathtt{C}$, is first transformed into a spherical coordinate as $(r,\theta,\phi)$ (Line 2). The depth range covered by the cell is described as its minimum depth $\mathtt{BoxMin}$ and maximum depth $\mathtt{BoxMax}$ with respect to its center $\mathtt{C}$ and cell size $\mathtt{L}$ (Line 3). The boundary of the projected area on the depth image (i.e., queried pixels in Fig. \ref{fig:states}(c)) is obtained as follows 
\begin{equation}
\label{eq:area}
	\begin{aligned}
		\mathtt{x_L} = \lfloor \frac{\theta - \arcsin(\frac{L}{2r})}{\psi_I}\rfloor,~\mathtt{x_R} = \lceil \frac{\theta + \arcsin(\frac{L}{2r})}{\psi_I}\rceil
		\\
		\mathtt{y_L} = \lfloor \frac{\phi - \arcsin(\frac{L}{2r})}{\psi_I}\rfloor,~\mathtt{y_R} = \lceil \frac{\phi + \arcsin(\frac{L}{2r})}{\psi_I}\rceil \\
	\end{aligned}
\end{equation}

Then, we prepare the following information for determining the occupancy states of the cell. Firstly, the number of queried pixels covered by the projected area is counted in $\mathtt{ProjPixels}$ (Line 5). Secondly, the number of occupied pixels $\mathtt{dSum}$, minimum depth $\mathtt{dMin}$, and maximum depth $\mathtt{dMax}$ within the projected area is provided by querying the 2-D segment tree as described in Section~\ref{subsec:segtree} (Line 6). Thirdly, we define the observation completeness $\alpha$ of a cell to determine whether a cell is completely observed by the current depth image, which is calculated by dividing the number of occupied pixels $\mathtt{dSum}$ over the number of queried pixels $\mathtt{ProjPixels}$ on the depth image (Line 7). The computed $\alpha$ quantifies if the cell's projected area has been sufficiently observed by points in the current depth image. A large $\alpha$ indicates that a major part of the cell lies inside the sensing area, with most of the pixels in its projected area on the depth image actively occupied by LiDAR points. Only cells with large $\alpha$ have their occupancy state updated. 

 The occupancy state of a cell is acquired by comparing the depth range of the depth image (represented by the minimum depth $\mathtt{dMin}$ and maximum depth $\mathtt{dMax}$) against the depth range of the cell (represented by minimum depth $\mathtt{BoxMin}$ and maximum depth $\mathtt{BoxMax}$), as shown in Fig.~\ref{fig:relative} and detailed below. We first categorize the cells by comparing the observation completeness $\alpha$ with a completeness threshold $\mathtt{\varepsilon}$. For completely observed cells (i.e., $\alpha>\mathtt{\varepsilon}$ in Line 9), the occupancy state is determined as unknown if $\mathtt{dMax}$ of the depth image is smaller than $\mathtt{BoxMin}$ of the cell (Line 10$\mathrm{\sim}$12). This condition indicates that the cell is located behind the object in the environment, as shown in Fig.~\ref{fig:relative}(a). When $\mathtt{dMin}$ of the depth image is greater than $\mathtt{BoxMax}$ of the cell, the occupancy state of the cell is determined as known (Line 13$\sim$15), indicating that it is located in front of the objects as shown in Fig.~\ref{fig:relative}(b). Otherwise, the occupancy state of the cell cannot be determined, as shown in Fig.~\ref{fig:relative}(c). For cells that are not completely observed (i.e., $\alpha\leqslant\mathtt{\varepsilon}$), the occupancy state is determined as unknown if $\mathtt{dMax}$ of the depth image is less than $\mathtt{BoxMin}$ of the cell (Line 20$\sim$22), indicating that the cell lies behind the objects as shown in Fig.\ref{fig:relative}(d). If this condition is not met, the occupancy state of the cell is considered undetermined due to incomplete observation (Line 22$\sim$24), as shown in Fig.~\ref{fig:relative}(e).

\begin{figure}[t]
	\setlength\abovecaptionskip{-0.1\baselineskip}
	\centering
	\includegraphics[width=0.9\linewidth]{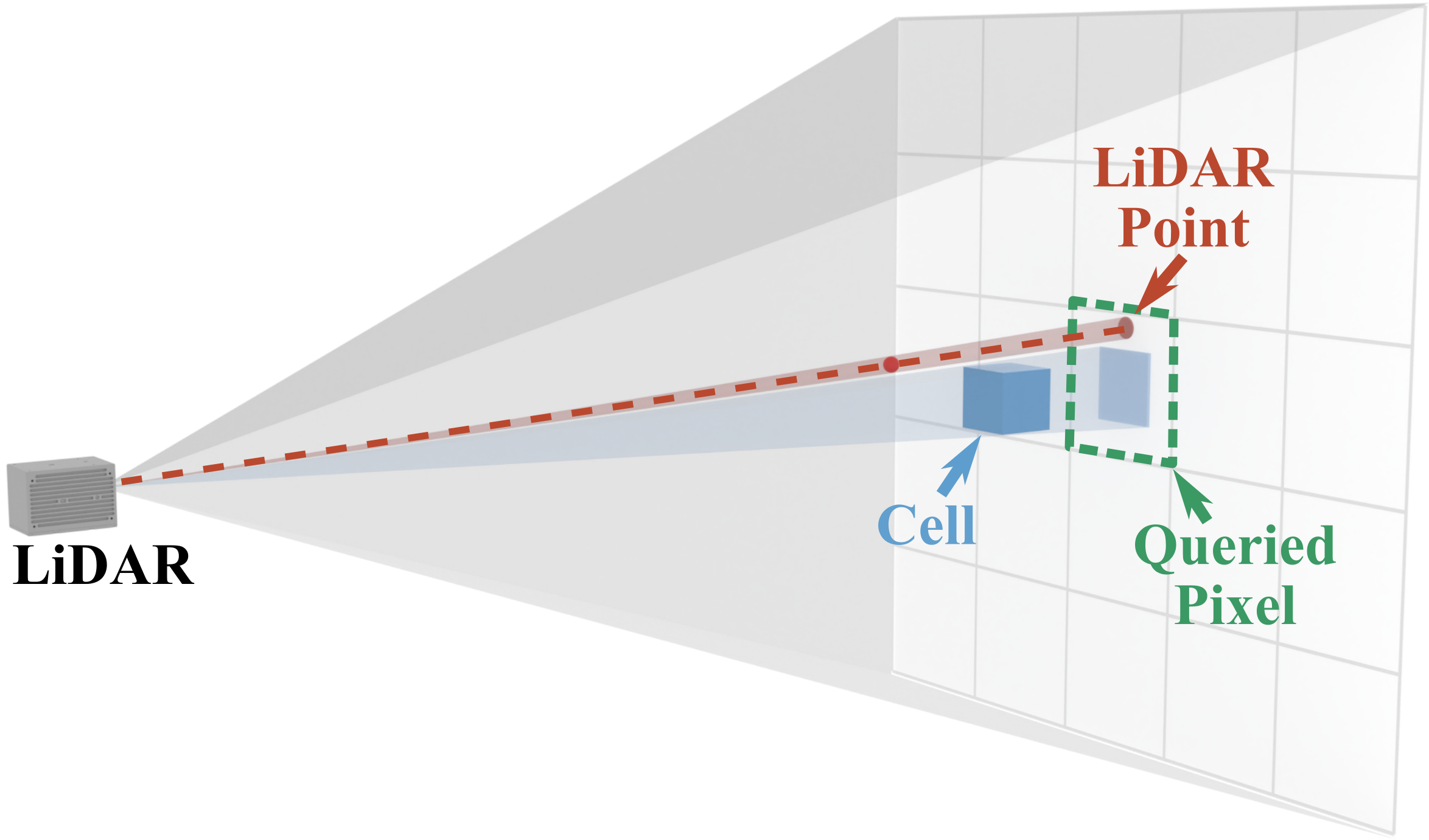}
	\caption{A special case when the pixel size is larger than the projected area of a cell. This happens when the depth image resolution is computed from LiDAR's angular resolution.}
	\label{fig:lidarres} 
 \vspace{-0.5cm}
\end{figure} 
In some situations, the projected area of a cell may be smaller than the one-pixel area on the depth image, as shown in Fig. \ref{fig:lidarres}. This occurs when the depth image resolution $\psi_I$ is computed from the LiDAR's angular resolution, as defined by (\ref{eq:res}). These cases are referred to as ``single pixel cases''. In such cases, recalling that a pixel only saves one LiDAR point with the smallest depth as presented in Section~\ref{subsec:proj}, we should further determine whether the LiDAR point actually traverses the cell by examining if the point lies within the cell's projected area.  If the LiDAR point lies outside the projected area, the occupancy state of the cell remains unknown. Otherwise, we use a similar logic to Alg. \ref{alg:occupancy} to determine the occupancy state. 
\subsection{Depth Image Resolution Analysis}\label{subsec:acc_analysis}
The resolution of a depth image is a critical parameter that plays a crucial role in balancing the accuracy and efficiency of occupancy state determination. A low-resolution depth image is more convenient to query but at the cost of increased information loss from the original point clouds. Conversely, a high-resolution depth image provides better accuracy but requires a larger computational effort, leading to reduced efficiency. When considering a depth image $\mathcal{M}_I$ with a standard resolution of $\psi_I$ determined by (\ref{eq:res}), it is possible to relax the resolution to $\psi = \gamma \psi_I$ by a relaxed factor $\gamma$ to achieve higher efficiency while sacrificing some accuracy. To evaluate the accuracy loss with the resolution relaxation factor $\gamma$, we derive a function $f(\gamma)$ that describes the retained accuracy from the original depth image $\mathcal{M}_I$ as a function of the factor $\gamma$. 

We assume a LiDAR is employed to scan a large-scale open environment free of obstacles at a static pose. When determining the free space by projecting to the depth image $\mathcal{M}_I$ with the standard resolution of $\psi_I$, the resultant occupancy map should show complete free space in a 3D spherical domain, the volume of which is denoted as $\boldsymbol{V}_I$. However, when relaxing the depth image resolution to $\psi=\gamma\psi_I$ for a relaxed depth image $\mathcal{M}$, some free space is falsely determined as unknown when projecting to $\mathcal{M}$ due to the single pixel cases explained in Section~\ref{subsec:determination}. We denote the volume of this case as $\boldsymbol{V}$. An illustration of $\boldsymbol{V}_I$ and $\boldsymbol{V}$ is presented in Fig. 1 in the supplementary materials\cite{cai2023supplementary}. The accuracy function $f(\gamma)$ is defined as the ratio of space correctly determined as free:
    \begin{equation}\label{eq:fdefine}
        f(\gamma) = \frac{\boldsymbol{V}}{\boldsymbol{V}_I}\\
    \end{equation}

We made the following assumption to ease the theoretical analysis:
\begin{assumption}\label{asm:3D}
We assume that the LiDAR is an idealized representation, which assumes a horizontal FoV of \SI{360}{\degree} and a symmetric vertical FoV in the range of $[-\alpha,\alpha]$ with infinitely small angular resolution (e.g., $\psi_{\mathtt{lidar}}\to 0$) and a detection range $R$. The occupancy mapping resolution is $d$. 
\end{assumption}

\begin{theorem}\label{theo:3D}
    The function $f(\gamma)$ is derived based on Assumption \ref{asm:3D} as follows:
	\begin{equation}\label{eq:f}
	f(\gamma) = \begin{cases}
            \frac{\gamma_0^3}{\gamma} + A(1-\frac{\gamma_0}{\gamma})\frac{1}{\gamma^2} & \gamma > \gamma_0\\
            1 & 0<\gamma \leqslant \gamma_0\\
            \end{cases}
	\end{equation}
	where $A$ and $\gamma_0$ are constant factors given by  
	\begin{subequations}
	\begin{align}
	A\!&=\! \frac{3(1-\cos(\alpha))+\frac{12}{\pi}\sin(\alpha)}{1\!-\!\frac{1}{2}(\sin(\alpha)\cos^2(\alpha)\!+\!(1\!-\!\sin(\alpha))^2(2\!+\!\sin(\alpha)))}\\
	\gamma_0\!&=\!\sqrt{3}\sin(\min\{\atan(\frac{1}{\sqrt{2}})\!+\!\alpha,\frac{\pi}{2}\}).		
	\end{align}
	\end{subequations}
\end{theorem}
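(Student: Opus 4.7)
My approach would be to compute $V$ and $V_I$ as explicit volume integrals in spherical coordinates and then take their ratio. Under Assumption~\ref{asm:3D}, the LiDAR's sensing region is the spherical zone $\{(r,\theta,\phi)\mid r\in[0,R],\ \theta\in[0,2\pi),\ \phi\in[-\alpha,\alpha]\}$, whose volume is $V_I = \tfrac{4\pi R^3}{3}\sin\alpha$. The standard resolution $\psi_I = d/R$ is chosen precisely so that every cell in this zone projects to at least one pixel and is correctly identified as free, so this $V_I$ is exactly the free-space volume at the standard resolution and provides the denominator of $f(\gamma)$.

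For the relaxed resolution $\psi = \gamma\psi_I$, a cell at distance $r$ enters the single-pixel regime of Section~\ref{subsec:determination} when its inscribed-sphere projection shrinks below one pixel, i.e.\ when $r > R/\gamma$; inside this regime the cell is captured only if the single representative LiDAR point in the enclosing pixel lies inside the cube's actual projected area. The threshold $\gamma_0$ is the largest $\gamma$ for which every cell in the sensing zone is still guaranteed to be captured. I would derive $\gamma_0$ from the geometry of the cube's projection: the factor $\sqrt{3}$ encodes the ratio of the cube's body-diagonal to its edge (circumscribed-sphere radius $d\sqrt{3}/2$), while $\atan(\tfrac{1}{\sqrt{2}})$ is the angle between a body-diagonal and an adjacent face-diagonal, governing the most oblique line-of-sight that still gives a maximal projected extent. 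The vertical FoV caps this obliqueness at $\atan(\tfrac{1}{\sqrt{2}})+\alpha$, saturated at $\pi/2$, which yields exactly the stated $\gamma_0$. For $\gamma\le\gamma_0$ the entire sensing zone is captured and $f(\gamma)=1$.

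For $\gamma>\gamma_0$ I would split the integral over the sensing zone into the safe ball $r\le R\gamma_0/\gamma$ and the borderline shell $r\in(R\gamma_0/\gamma,R]$. The safe ball contributes the first term of $f(\gamma)$ directly, since its volume is a $(\gamma_0/\gamma)^3$ fraction of the whole zone. In the shell, under the infinite angular resolution assumption the stored beam in each pixel is effectively uniformly distributed over the pixel, so the capture probability of a cube at distance $r$ equals the ratio of its projected solid angle to the pixel solid angle, which scales as $R^2/(\gamma^2 r^2)$. Multiplying by the volume element $r^2\cos\phi\,dr\,d\phi\,d\theta$, the $r^2$ cancels; the $r$-integral collapses to a linear factor proportional to $1-\gamma_0/\gamma$, and the surviving angular integral, after dividing by $V_I$, produces the constant $A$.

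The main obstacle is the precise angular integral yielding $A$. Cells straddling the elevation boundary $\phi=\pm\alpha$ are only partially inside the sensing zone, and their capture rate is attenuated by the completeness ratio $\mathtt{dSum}/\mathtt{ProjPixels}$ in Algorithm~\ref{alg:occupancy}, thresholded by $\varepsilon$. Splitting the angular domain into a ``bulk'' region (which contributes $3(1-\cos\alpha)$ to the numerator of $A$) and a ``boundary layer'' from partially observed cells (which contributes $\tfrac{12}{\pi}\sin\alpha$), and observing that the denominator of $A$ simplifies via $\sin^2\alpha+\cos^2\alpha=1$ down to $\sin\alpha$ (the zone-volume normalization against the full sphere), recovers the stated closed form. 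Assembling the safe-ball term with the borderline-shell contribution $A(1-\gamma_0/\gamma)/\gamma^2$ and dividing by $V_I$ then gives the formula in the theorem.
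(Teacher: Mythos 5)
Your overall strategy is sound and is essentially the one the formula forces: write $V$ and $V_I$ as spherical-coordinate integrals, identify the single-pixel regime, split the sensing zone at $r=R\gamma_0/\gamma$ into a fully captured ball and a probabilistically captured shell, and model the stored point in each pixel as uniform so that the capture probability is the ratio of the cell's projected solid angle to the pixel solid angle. This correctly yields $V_I=\frac{4}{3}\pi R^3\sin\alpha$, the $(1-\gamma_0/\gamma)/\gamma^2$ structure of the shell term, and the denominator of $A$, which as you observe telescopes to $\sin\alpha$ (the zone volume normalized by the full ball). One remark on the first term: you obtain $(\gamma_0/\gamma)^3$ while the theorem prints $\gamma_0^3/\gamma$; your version is the defensible one, since the printed form gives $f(\gamma_0^{+})=\gamma_0^2>1$ for $\alpha>0$ (violating $f\leqslant 1$ and continuity with the second branch) and is inconsistent with the depressed cubic $\omega\gamma^3-A\gamma+(A\gamma_0-\gamma_0^3)=0$ that equation (7) inverts, so treat the printed exponent as a typo.

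The genuine gap is your route to the numerator of $A$. You attribute $3(1-\cos\alpha)$ to a ``bulk'' and $\frac{12}{\pi}\sin\alpha$ to a boundary layer of cells straddling $\phi=\pm\alpha$ attenuated by the completeness ratio and $\varepsilon$. That cannot work: the straddling layer has angular thickness $O(d/r)$ and hence vanishing relative volume, so it contributes nothing at $O(1)$ to $A$; moreover your ``bulk'' term $3(1-\cos\alpha)$ vanishes as $\alpha\to 0$ while the alleged boundary term does not, which is backwards. The missing ingredient is the view-direction-dependent silhouette of the cube: a ray in direction $\mathbf{u}$ traverses an axis-aligned cube of side $d$ iff it falls inside the cube's silhouette, whose area is $d^2(|u_1|+|u_2|+|u_3|)$. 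With pixel solid angle $\psi^2\cos\phi$, the capture probability is $d^2(|u_1|+|u_2|+|u_3|)/(r^2\psi^2\cos\phi)$; the $\cos\phi$ cancels against the volume Jacobian $r^2\cos\phi$, the $r^2$ cancels as you noted, and the surviving angular integral is $\int_{-\alpha}^{\alpha}\int_0^{2\pi}\big(\cos\phi(|\cos\theta|+|\sin\theta|)+|\sin\phi|\big)\,d\theta\,d\phi=16\sin\alpha+4\pi(1-\cos\alpha)$. Normalizing by $V_I$ gives exactly $A=\big(3(1-\cos\alpha)+\frac{12}{\pi}\sin\alpha\big)/\sin\alpha$: the $\frac{12}{\pi}\sin\alpha$ term is the contribution of the four vertical faces of the cube and $3(1-\cos\alpha)$ that of the two horizontal faces, not a bulk/boundary split. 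The same silhouette extent, maximized over orientations and over the zone, $\sqrt{2}\sin\alpha+\cos\alpha=\sqrt{3}\sin(\arctan(\frac{1}{\sqrt{2}})+\alpha)$, is what fixes $\gamma_0$, consistent with your geometric reading of that constant. Without the silhouette-area formula your integral has the right shape but no way to produce the stated constant.
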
 
\begin{figure}[t]
	\setlength\abovecaptionskip{-0.1\baselineskip}
	\centering
	\includegraphics[width=\linewidth]{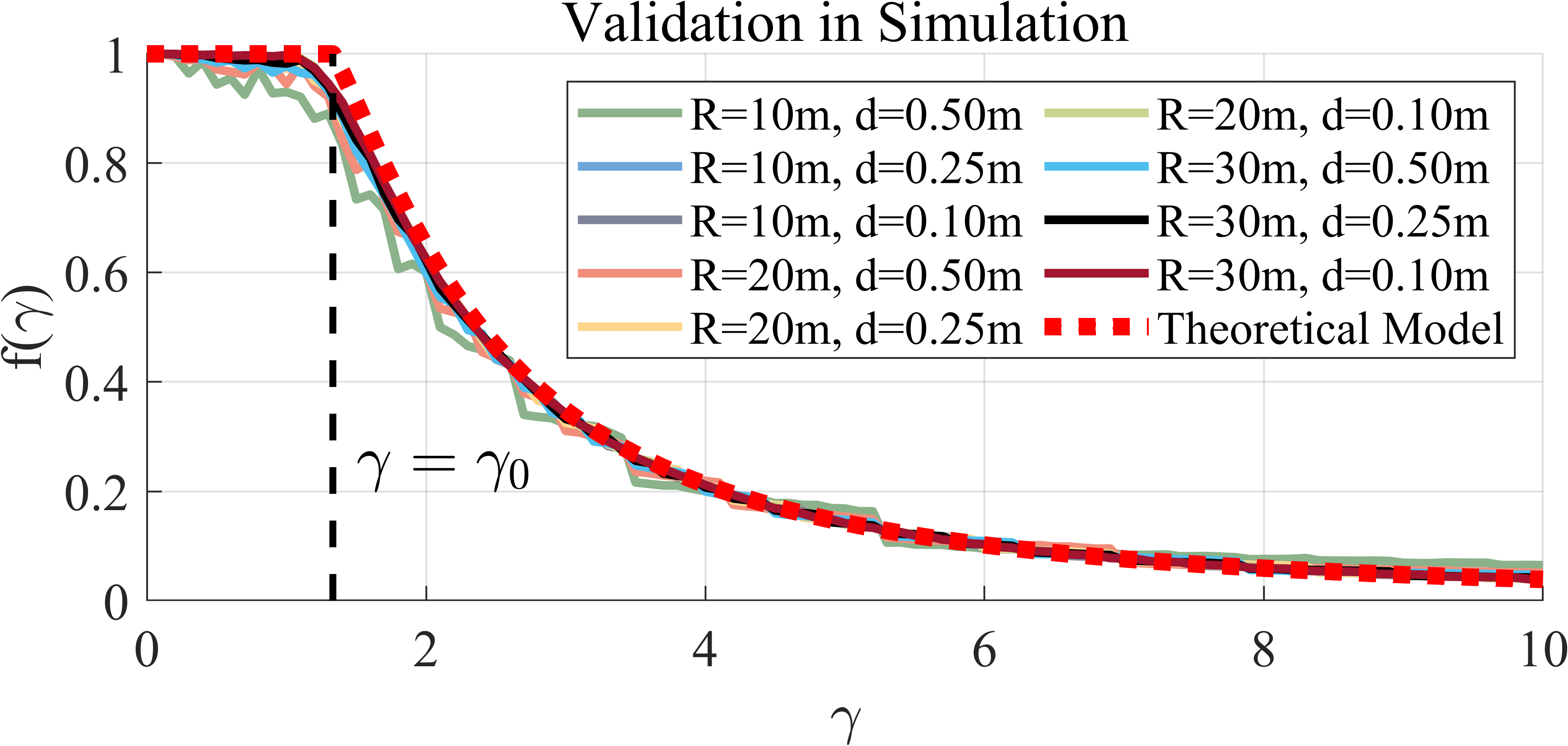}
	\caption{The simulation results for validation on accuracy function $f(\gamma)$.}
	\label{fig:AccAnalysis} 
 \vspace{-0.2cm}
\end{figure} 
\begin{proof}
See Supplementary I\cite{cai2023supplementary}.
\end{proof}

According to Theorem \ref{theo:3D}, the relaxed factor $\gamma$ can be determined by an expected accuracy value {$\omega\in(0,1]$} using the trigonometric form of Cardano's Formula\cite{spiegel2009formula}, given as:
\begin{equation}\label{eq:acc}
	\gamma = \begin{cases}
            2\sqrt{\frac{A}{3\omega}}\cos(\beta) & 0<\omega<1\\
            1 & \omega = 1
        \end{cases}
\end{equation}
where $\beta$ is given as
\begin{equation}
	\beta = \frac{1}{3}\arccos\big(-\frac{A\gamma_0-\gamma_0^3}{2\omega}(\frac{A}{3\omega})^{\frac{3}{2}}\big)
\end{equation}
It is worth noting that for  $\alpha \in [0,\pi/2]$, $\gamma_0$ always satisfies $\gamma_0 \geqslant 1$. Therefore, setting $\gamma = 1$ always results in $f(\gamma) = 1$, regardless of the value of $\alpha$. Besides, it is noted the accuracy function $f(\gamma)$ attains a value of 1 for $\gamma\in(0,\gamma_0]$, as shown in equation (\ref{eq:f}). Therefore, in equation (\ref{eq:acc}), we select $\gamma=1$ as the solution for $\omega=1$.

The accuracy function $f(\gamma)$ is validated through a series of simulation experiments involving various LiDAR detection ranges and map resolutions. Specifically, given a detection range $R$ and map resolution $d$, a batch of point clouds covering the free space within the detection range is generated on a sphere of radius $R$. To compute the volume $\boldsymbol{V}_I$, we count the number of cells in the map with resolution $d$ that are identified as free by projecting the map to the original depth image $\mathcal{M}_I$ and multiplying the result by the unit volume $d^3$. Similarly, we obtain the volume $\boldsymbol{V}$ by projecting the map onto the relaxed depth image $\mathcal{M}$. The accuracy function value $f(\gamma)$ is calculated using the definition in (\ref{eq:fdefine}). In the simulations, we set the LiDAR's FoV $\Phi$ to $[$\SI{-15}{\degree}, \SI{15}{\degree}$]$.

The results of the simulation experiments are presented in Fig. \ref{fig:AccAnalysis}.  The theoretical model fits the experimental curves well. However, slight errors arise when $\gamma$ approaches $\gamma_0$ due to discretization in counting the number of cells to evaluate the volume of covered regions. These errors are evident in the curves of low resolution and short detection range (e.g., in the setup of $R=10\mathrm{m}$ and $d=0.5\mathrm{m}$, which is the green solid line in Fig. \ref{fig:AccAnalysis}).

\section{Occupancy Mapping}\label{sec:mapping}
In this section, we describe how to update and query the occupancy states using the map structure in D-Map.
\subsection{Occupancy Map Structure}\label{subsec:map}
With an aim to optimize the balance between computation and memory efficiency, D-Map leverages a hashing grid map to maintain occupied space and an octree to maintain unknown space. 
\subsubsection{Hashing Grid Map}
As there are typically fewer occupied regions in the environment than free and unknown ones, we maintain the occupied space of the environment in a hashing grid map using the voxel hashing technique\cite{niessner2013hashmap}, which allows efficient update and query operations in a time complexity of $\boldsymbol{\mathcal{O}}(1)$. The hash key value for a given point $\boldsymbol{p} = [x,y,z]$ is computed by a hashing function $\mathtt{Hash}$, defined as follows:
\begin{equation}\label{eq:hash}
	\begin{aligned}
		&\mathtt{Hash}(\boldsymbol{p}) = (\mathtt{P}^2 n_z + \mathtt{P} n_y + n_x)~ \mathtt{mod}~ \mathtt{Q}\\
		&n_x = \lfloor \frac{x}{d}\rfloor,\, n_y = \lfloor \frac{y}{d}\rfloor,\,n_z = \lfloor \frac{z}{d}\rfloor,\\
	\end{aligned}
\end{equation}
where $d$ represents the resolution of the hashing grid map. $\mathtt{P}$ and $\mathtt{Q}$ are large prime numbers while $\mathtt{Q}$ also serves as the size of the hashing table. \texttt{mod} is the modulus calculation between two integers. The value of $\mathtt{P}$ and $\mathtt{Q}$ are carefully selected to minimize conflict probability \cite{teschner2003optimized}, with $\mathtt{P}$ and $\mathtt{Q}$ set to $116101$ and $201326611$ in our work, respectively. 

\subsubsection{Octree}
The tree-based map structure is a commonly used approach for occupancy mapping due to its high memory efficiency. Among various tree-based data structures, the octree  \cite{meagher1982octree} stands out as it outperforms other spatial data structures for dynamic updates \cite{FASTLIO2}. Additionally, the spatial division on the octree naturally allows for determining the occupancy states of large cells during tree updates. Therefore, we utilize an octree to organize the unknown space. 

In D-Map, a node on the octree contains the following elements:
\begin{itemize}
	\item The point array $\mathtt{ChildNodes[8]}$ contains the address to its eight child nodes. The point array is empty if the node is a leaf node.
	\item The center $\mathtt{C}$ of the cell represented by the node.
	\item The size $\mathtt{L}$ of the cell represented by the node.
\end{itemize}
\subsubsection{Initialization}
To initialize the mapping procedure in D-Map, the occupancy states of the environment are considered to be entirely unknown. In the implementation, given an initial bounding box $\mathtt{C_{bbx}}$ of the interested regions, the root node of the octree is initialized to represent an unknown cube $\mathtt{C_{root}}$ whose the center $\mathtt{C}$ is allocated at the center of $\mathtt{C_{bbx}}$. The size $\mathtt{L}$ of the root node is assigned as the longest side length of the bounding box $\mathtt{C_{bbx}}$, and the point array $\mathtt{ChildNodes}$ for child nodes is initialized as empty. Besides, the hash table in the hashing grid map is initialized as an empty table. Notably, the initial bounding box does not restrict the mapping area, as both the octree and the hashing grid map allow for the growth of the mapping space on-the-fly \cite{duberg2020ufomap, niessner2013hashmap}.
\subsection{Occupancy Update}\label{subsec:update}
\begin{algorithm}[t]
	\SetAlgoLined
	\SetKwInOut{Input}{Input}
	\SetKwInOut{Output}{Output}
	\SetKwInOut{Param}{Params}	
 	\SetKwFunction{rasterize}{$\mathtt{Rasterization}$}
	\SetKwFunction{build}{$\mathtt{Build2DSegTree}$}
        \SetKwFunction{addocc}{$\mathtt{UpdateGridMap}$}
        \SetKwFunction{area}{$\mathtt{SensingArea}$}
	\SetKwFunction{update}{$\mathtt{UpdateOctree}$}
	\SetKwFunction{return}{$\mathtt{return}$}
	\SetKwFunction{cube}{$\mathtt{GetCenterAndSize}$}
	\SetKwFunction{intersect}{$\mathtt{Intersected}$}
	\SetKwFunction{contain}{$\mathtt{contain}$}
	\SetKwFunction{delete}{$\mathtt{DeleteNode}$}
	\SetKwFunction{determine}{$\mathtt{DetermineOccupancy}$}
	\SetKwFunction{leaf}{$\mathtt{IsLeaf}$}
	\SetKwFunction{split}{$\mathtt{Split}$}
	\Param{Map Resolution $d$,~Initial Grid Size $\mathtt{E}$,\\ 
                LiDAR Detection Range $R$,\\LiDAR FoV $\Phi$ and $\Theta$\\ }
	\Input{Sensor Pose $\mathtt{T}$,~Point Cloud $\mathcal{P}$ \\}
	\SetKwProg{Fn}{Function}{}{}
	\SetKwProg{Alg}{Algorithm Start}{}{}
	\Alg{}{
            \addocc{$\mathcal{P}$};\\ 
            $\mathcal{M}\leftarrow$\rasterize{$\mathcal{P}$};\\
            $\mathcal{T}\leftarrow$\build{$\mathcal{M}$};\\
            $\mathcal{V}\leftarrow$\area{$\mathtt{T}$,$R$,$\Phi$,$\Theta$};\\
		\update{$\mathtt{RootNode}$,$\mathtt{T}$,$\mathcal{V}$,$\mathcal{T}$};\\
	}
	        \textbf{Algorithm End}\\
	\Fn{\update{$\mathtt{Node}$,$\mathtt{T}$,$\mathcal{V}$,$\mathcal{T}$}}{
		$\mathtt{L}$,$\mathtt{C}\leftarrow$\cube{$\mathtt{Node}$};\\
		\lIf{\texttt{!}\intersect{$\mathtt{T}$,$\mathtt{C}$,$\mathcal{V}$}}{\return}
		\eIf{$\mathtt{L}>\mathtt{E}$}{
			$\mathtt{S}\leftarrow\mathtt{Undetermined}$;
		}{
			$\mathtt{S}\leftarrow$\determine{$\mathtt{T}$,$\mathtt{C}$,$\mathtt{L}$,$\mathcal{T}$};
		}
		\Switch{$\mathtt{S}$}{
			\Case{$\mathtt{Unknown}$}{\return;}
			\Case{$\mathtt{Known}$}{\delete{$\mathtt{Node}$};}
			\Case{$\mathtt{Undetermined}$}{
                    \eIf{$\mathtt{L}\leqslant d$}{
                        \delete{$\mathtt{Node}$}
                    }{
 				\lIf{\leaf{$\mathtt{Node}$}}{\split{$\mathtt{Node}$}}				
				\ForEach{$\mathtt{Child}$ $\mathrm{in}$ $\mathtt{ChildNodes}$}{
					\update{$\mathtt{Child}$,$\mathtt{T}$,$\mathcal{V}$};
				}                   
                    }
			}
		}
		
	}
	\textbf{End Function}
	
	\caption{Occupancy Update}
	\label{alg:update}
    \setlength{\textfloatsep}{0pt} 
\end{algorithm}

The occupancy updates in D-Map involve updating the hashing grid map and the octree, as described in Alg.~\ref{alg:update}. The point clouds are directly inserted into the hashing grid map using the hash function in (\ref{eq:hash}) (Line 2) and rasterized into a depth image $\mathcal{M}$ (Line 3). A 2-D segment tree is constructed from the depth image $\mathcal{M}$ for the subsequent occupancy state determination as described in Section~\ref{sec:method} (Line 4). The sensing area $\mathcal{V}$ for cell extraction is obtained from the sensor pose $\mathtt{T}$, LiDAR detection range $R$ and LiDAR FoV $\Phi\times\Theta$ (Line 5). D-Map employs an on-tree update strategy to update the occupancy states of octree nodes within the LiDAR sensing area $\mathcal{V}$ using a function named $\mathtt{UpdateOctree}$ (Line 6). 

The function $\mathtt{UpdateOctree}$ is described in Lines 9$\sim$33 and explained as follows. Starting from the root node of the octree, the corresponding cell center $\mathtt{C}$ and its size $\mathtt{L}$ are obtained (Line 9). The algorithm checks if the corresponding cell intersects with LiDAR sensing area $\mathcal{V}$, terminating further updates if there is no intersection (Line 10). To avoid meaningless occupancy state determination on a too-large cell that always returns undetermined, we directly split those cells larger than an initial cell size $\mathtt{E}$ for efficiency (Line 11$\sim$13). Otherwise, the occupancy state $\mathtt{S}$ is determined by Alg. \ref{alg:occupancy} (Line 14). The operation on the tree node is decided based on the occupancy state $\mathtt{S}$: If the cell is determined as unknown, the node is kept on the tree, and no further operations are required (Line 17$\sim$19). The node is removed from the tree if the cell is known (i.e., free or occupied) (Line 20$\sim$22). If the occupancy state of the node cannot be determined, further updates are required (Line 23$\sim$32). A special condition is considered when the cell size $\mathtt{L}$ achieves the map resolution $d$ (Line 24$\sim$26). In this case, at least one pixel with a depth no smaller than the cell's minimum depth $\mathtt{BoxMin}$ exists (otherwise, it is determined as unknown for $\mathtt{dMax}<\mathtt{BoxMin}$, see Alg.~\ref{alg:occupancy}). As a result, the cell can be determined as known since it must have been either hit or traversed by the corresponding point cloud on that pixel. Otherwise, if the cell size is greater than the map resolution, we split the node, if not have done so (Line 27), and performed updates on its eight children (Line 28$\sim$30). 

The recursive function $\mathtt{UpdateOctree}$ operates in a manner that visits cells from the largest to the smallest size. This approach allows the determination of occupancy states directly on large cells (if they are unknown or free), thus avoiding unnecessary updates on smaller cells. As a result, occupancy updates on the octree are performed concurrently with the tree traversal, resulting in a so-called  ``on-tree'' update strategy. This strategy is distinct from the classical pipeline \cite{hornung2013octomap,kwon2019superray}, which updates cells at the constant map resolution $d$ after ray-casting. Furthermore, D-Map continuously removes known cells from the environment's unknown space, making it a decremental mapping method. This property enhances the efficiency as the mapping process progresses.

\subsection{Occupancy State Query}\label{subsec:query}
Since two data structures are used in D-Map, two steps are performed to obtain the occupancy state of a cell at map resolution. Firstly, the cell is determined as unknown if its corresponding node exists on the octree. Otherwise, the region is determined as known. Subsequently, the hashing grid map is queried to determine whether it is occupied. If not, the region is determined as free. 

\section{Time Complexity Analysis}\label{sec:timeanalysis}
\begin{figure}[t]
	\setlength\abovecaptionskip{-0.1\baselineskip}
	\centering
	\includegraphics[width=\linewidth]{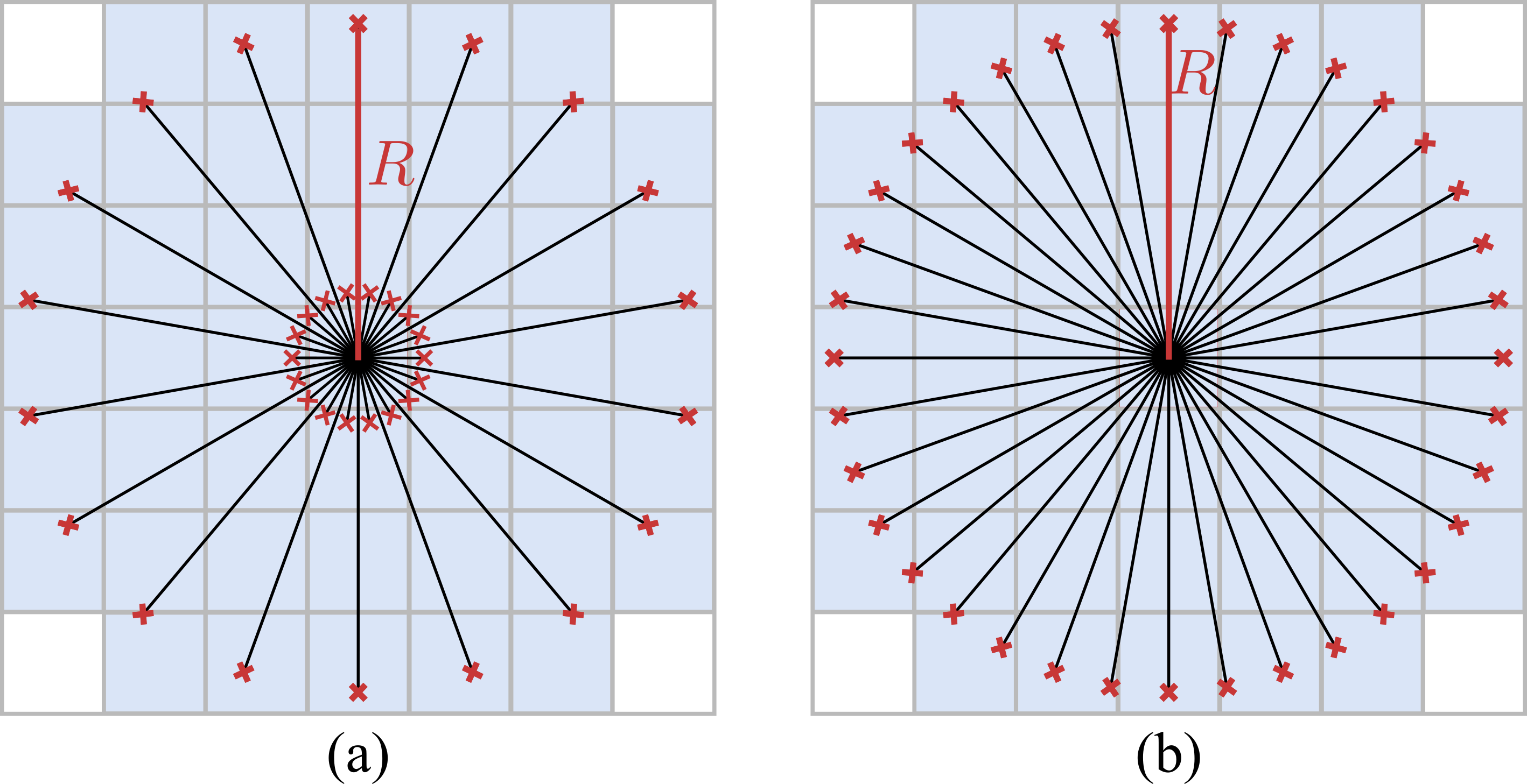}
	\caption{A 2-D illustration of the worst-case occupancy updates for (a) D-Map and (b) ray-casting-based methods. In the worst-case scenario for D-Map, two neighboring points appear at the minimum and maximum distance to the sensor origin, resulting in a serrated-shaped point cloud. In contrast, the worst-case scenario for ray-casting-based methods results in a spherical-shaped point cloud located at the maximum distance to the sensor origin. }
	\label{fig:worstcase} 
 \vspace{-0.3cm}
\end{figure} 
In this section, we analyze the time complexity of updating and querying the occupancy states on D-Map. To facilitate further analysis of the benchmark experiments in Section~\ref{sec:benchmark}, we additionally provide a time complexity analysis of the classical occupancy update pipeline, which involves ray-casting prior to map updates. 

\subsection{Occupancy State Update}

The occupancy state updates in D-Map are composed of two parts: updating occupied cells on the hashing grid map and updating unknown cells on the octree. The time complexity for occupancy updates is mainly due to the on-tree update in the octree data structure, as the time of maintaining occupied points in a hashing grid map is constant at $\boldsymbol{\mathcal{O}}(1)$. The on-tree update strategy involves searching for nodes within the LiDAR sensing area on the octree and determining their occupancy states. To analyze the time complexity, we first consider the number of nodes to be visited on the octree, as presented in the following lemma:
\begin{lemma}\label{lemma:nodes}
    Suppose the maximum side length of environment bounding box $C_{bbx}$ on the octree is $D$, and map resolution is $d$. The number of nodes to be visited $\mathcal{S}_{\mathtt{DMap}}$ on the octree for the on-tree update strategy in D-Map is bounded as:
    \begin{equation}
        \begin{aligned}
            \mathcal{S}_{\mathtt{DMap}}=\boldsymbol{\mathcal{O}}\big(n\frac{R}{d}\log{(\frac{D}{d})}\big)
        \end{aligned}
    \end{equation}
    where $n$ is the number of points projected to the depth image, and $R$ is the detection range of the LiDAR sensor.
\end{lemma}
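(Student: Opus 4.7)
The plan is to count visited octree nodes level by level and reduce the count to a charging argument onto the populated pixels of the depth image. From Alg.~\ref{alg:update}, a cell is visited only if its parent was undetermined (or it is the root), so the visited cells form a subtree of the octree whose internal nodes are exactly the undetermined cells and whose leaves are where the recursion stopped (either a determined cell or a level-$K$ cell of minimum size $d$). Writing $U_k$ for the number of undetermined cells at depth $k$ and $K=\log_2(D/d)$ for the tree depth, this yields the identity
\begin{equation*}
\mathcal{S}_{\mathtt{DMap}} \;=\; 1 + 8\sum_{k=0}^{K-1} U_k,
\end{equation*}
so the whole task reduces to bounding $U_k$ uniformly in $k$.

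I would next establish the per-level geometric bound $U_k = \boldsymbol{\mathcal{O}}(nR/L_k)$ via a charging argument. By Alg.~\ref{alg:occupancy}, every cell declared \texttt{Undetermined} has at least one populated depth-image pixel inside its projected area computed by (\ref{eq:area}). For a fixed populated pixel with angular direction $(\theta_p,\phi_p)$, the level-$k$ cells whose projected area contains this direction are $\boldsymbol{\mathcal{O}}(1)$ per radial shell of thickness $L_k=D/2^k$: the angular grid of level-$k$ cells at radius $r$ has spacing $L_k/r$, so $(\theta_p,\phi_p)$ falls inside only a constant number of angular neighbors. Since the detection range $R$ admits at most $R/L_k$ radial shells, each populated pixel lies in the projections of at most $\boldsymbol{\mathcal{O}}(R/L_k)$ level-$k$ cells. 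Charging each undetermined cell to one of its populated pixels and summing over the $n$ populated pixels gives
\begin{equation*}
U_k \;\leq\; \boldsymbol{\mathcal{O}}\!\big(nR/L_k\big) \;\leq\; \boldsymbol{\mathcal{O}}\!\big(nR/d\big),
\end{equation*}
where the last inequality uses $L_k \geq d$.

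Substituting this uniform bound into the level sum and using $K=\log_2(D/d)$ yields
\begin{equation*}
\mathcal{S}_{\mathtt{DMap}} \;\leq\; 1 + 8\sum_{k=0}^{K-1}\boldsymbol{\mathcal{O}}\!\big(nR/d\big) \;=\; \boldsymbol{\mathcal{O}}\!\big(n\tfrac{R}{d}\log\tfrac{D}{d}\big),
\end{equation*}
which matches the statement. Tightness of the per-level bound is witnessed by the serrated worst case in Fig.~\ref{fig:worstcase}(a): the alternation between the minimum and maximum depths makes every level-$K$ cell whose depth interval lies in $[R_{\min},R]$ straddle the surface, saturating $U_K=\Theta(nR/d)$. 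The main obstacle is the geometric charging step itself: care is needed for the single-pixel regime of Sec.~\ref{subsec:determination} where $L_k/r<\psi_I$ (so that the cell projection is sub-pixel) and for the ceiling/floor rounding in (\ref{eq:area}) that can enlarge the projection by an additive constant number of pixels. Both effects inflate only the hidden constants, after which the summation over the $\boldsymbol{\mathcal{O}}(\log(D/d))$ tree levels is immediate.
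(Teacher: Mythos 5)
Your proof is correct and takes essentially the same route as the paper's: the visited subtree is counted level by level, each level's undetermined cells are charged to the $n$ populated pixel directions times the $\boldsymbol{\mathcal{O}}(R/L_k)$ radial shells a fixed direction can pierce, and the $\boldsymbol{\mathcal{O}}(\log(D/d))$ tree depth supplies the final factor --- this is just the levelwise restatement of the serrated worst case of Fig.~\ref{fig:worstcase}(a), in which every radial column of $R/d$ leaf cells together with all of its ancestors must be visited. One incidental observation: your per-level bound $U_k=\boldsymbol{\mathcal{O}}(nR/L_k)$, summed as a geometric series over $L_k=D/2^k$, already yields $\boldsymbol{\mathcal{O}}(nR/d)$ in total, so the uniform substitution $U_k\leqslant\boldsymbol{\mathcal{O}}(nR/d)$ is deliberately loose, but it still establishes the stated bound.
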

\begin{proof}
See Supplementary II-A\cite{cai2023supplementary}.
\end{proof}

Then we derive the time complexity of occupancy state determination (see Alg. \ref{alg:occupancy}) that is frequently used in the update process. 
\begin{lemma}\label{lemma:determination}
The time complexity of occupancy state determination is $\boldsymbol{\mathcal{O}}(\log({\Phi_I}/{\psi_I}) \log({\Theta_I}/{\psi_I}))$, where $\Phi_I$ and $\Theta_I$ are the FoV angles of the depth image and $\psi_I$ is the depth image resolution.
\end{lemma}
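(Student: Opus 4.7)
The plan is to inspect Algorithm~\ref{alg:occupancy} line by line, isolate the single step whose cost is non-constant, and bound that step using the standard 2-D segment tree query complexity already stated in Section~\ref{subsec:segtree}.

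First I would go through Lines 2--7 and 8--27 of Algorithm~\ref{alg:occupancy} and classify each as $\boldsymbol{\mathcal{O}}(1)$. The projection to spherical coordinates (Line 2), the computation of $\mathtt{BoxMin}$ and $\mathtt{BoxMax}$ (Line 3), the coverage bounds $\mathtt{x_L},\mathtt{x_R},\mathtt{y_L},\mathtt{y_R}$ via equation~(\ref{eq:area}) (Line 4), the pixel count $\mathtt{ProjPixels}$ (Line 5), the observation completeness $\alpha$ (Line 7), and the subsequent cascaded comparisons against $\mathtt{BoxMin}$, $\mathtt{BoxMax}$ and $\mathtt{\varepsilon}$ (Lines 8--27) all consist of a constant number of arithmetic operations and branches that do not depend on the image size. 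Thus their cumulative cost is $\boldsymbol{\mathcal{O}}(1)$.

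Next I would focus on Line 6, the single call $\mathcal{T}.\mathtt{query}(\mathtt{x_L},\mathtt{x_R},\mathtt{y_L},\mathtt{y_R})$ on the 2-D segment tree, which dominates the cost. The depth image has horizontal extent $\Phi_I$ and vertical extent $\Theta_I$ at resolution $\psi_I$, so it is a 2-D array of size $N \times M$ with
\begin{equation*}
N = \Bigl\lceil \frac{\Phi_I}{\psi_I} \Bigr\rceil, \qquad M = \Bigl\lceil \frac{\Theta_I}{\psi_I} \Bigr\rceil.
\end{equation*}
By the segment-tree query bound recalled in Section~\ref{subsec:segtree}, a range query on such an $N \times M$ array takes $\boldsymbol{\mathcal{O}}(\log N \log M)$ time, which, substituting the expressions for $N$ and $M$, equals $\boldsymbol{\mathcal{O}}\bigl(\log(\Phi_I/\psi_I)\,\log(\Theta_I/\psi_I)\bigr)$. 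A brief remark is needed that the single query simultaneously returns $\mathtt{dMin}$, $\mathtt{dMax}$, and $\mathtt{dSum}$, since all three are stored per node during construction, so we do not incur a multiplicative constant exceeding a small factor.

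Finally I would combine the two bounds: total time equals the $\boldsymbol{\mathcal{O}}(1)$ cost of Lines~2--5, 7--27 plus the $\boldsymbol{\mathcal{O}}(\log(\Phi_I/\psi_I)\log(\Theta_I/\psi_I))$ cost of Line~6, yielding the claimed overall complexity. The proof is essentially a bookkeeping argument, so I do not anticipate any serious obstacle; the only subtle point is ensuring that the special ``single pixel case'' discussed at the end of Section~\ref{subsec:determination} is handled in constant time (a single depth comparison and a single containment test), which it plainly is, so the complexity remains unchanged.
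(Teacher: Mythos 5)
Your proposal is correct and follows essentially the same route the paper takes: all steps of Algorithm~\ref{alg:occupancy} are constant-time except the single 2-D segment tree range query, and since the depth image is an array of roughly $(\Phi_I/\psi_I)\times(\Theta_I/\psi_I)$ pixels, the query bound $\boldsymbol{\mathcal{O}}(\log N\log M)$ from Section~\ref{subsec:segtree} gives the stated complexity. Your added observations that $\mathtt{dMin}$, $\mathtt{dMax}$, and $\mathtt{dSum}$ are returned by one query and that the single-pixel case adds only constant work are correct and do not change the argument.
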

\begin{proof}
See Supplementary II-B\cite{cai2023supplementary}.
\end{proof}

Finally, the time complexity for occupancy updates on D-Map is given as follows:
\begin{theorem}\label{theo:update}
    The time complexity of occupancy updates in D-Map is  $\boldsymbol{\mathcal{O}}\big(\frac{nR}{d}\log{(\frac{D}{d})}\big)$, where $n$ is the number of points projected to the depth image, $d$ is the map resolution, and $R$ is the LiDAR detection range.
\end{theorem}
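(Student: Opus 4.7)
The plan is to obtain the stated bound by decomposing Algorithm \ref{alg:update} into its constituent steps, dispatching the cheap ones in bulk, and then multiplying the two preceding lemmas to handle the on-tree traversal, which I expect to dominate the total cost. Concretely, a single invocation of the top-level procedure consists of: (a) inserting the $n$ incoming points into the hashing grid map via \texttt{UpdateGridMap}, which by the $\boldsymbol{\mathcal{O}}(1)$ per-point cost of the hash function in (\ref{eq:hash}) totals $\boldsymbol{\mathcal{O}}(n)$; (b) depth-image rasterization, which is $\boldsymbol{\mathcal{O}}(n)$ since each point is projected once and pixel-wise minimum retention is constant-time; (c) construction of the 2-D segment tree $\mathcal{T}$ over the depth image of size $(\Phi_I/\psi_I)\times(\Theta_I/\psi_I)$, which is linear in the number of pixels up to a logarithmic factor; and (d) the recursive call to \texttt{UpdateOctree} from the root.

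Next I would bound step (d), which is the heart of the argument. Lemma \ref{lemma:nodes} directly bounds the number of octree nodes visited by $\mathcal{S}_{\mathtt{DMap}}=\boldsymbol{\mathcal{O}}(n(R/d)\log(D/d))$. At each visited node, the per-node bookkeeping (intersection test against $\mathcal{V}$, retrieval of center and size, branching on $\mathtt{S}$, and node deletion or splitting into eight children) is $\boldsymbol{\mathcal{O}}(1)$, while the dominant per-node cost is the single call to \texttt{DetermineOccupancy}, bounded by Lemma \ref{lemma:determination} at $\boldsymbol{\mathcal{O}}(\log(\Phi_I/\psi_I)\log(\Theta_I/\psi_I))$. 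Multiplying the two lemmas therefore gives a total for step (d) of $\boldsymbol{\mathcal{O}}\!\left(n\tfrac{R}{d}\log(\tfrac{D}{d})\log(\tfrac{\Phi_I}{\psi_I})\log(\tfrac{\Theta_I}{\psi_I})\right)$, which clearly dominates the $\boldsymbol{\mathcal{O}}(n)$ cost of steps (a) and (b) and, provided the depth-image dimensions are no larger than those appearing in step (d), also dominates (c).

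The hard part, and the reason I expect this proof to require care rather than just a mechanical multiplication, is collapsing the two extra logarithmic factors from Lemma \ref{lemma:determination} into the stated bound $\boldsymbol{\mathcal{O}}(n(R/d)\log(D/d))$. The key observation is that by (\ref{eq:res}) we have $\psi_I \geqslant \psi_{\mathtt{lidar}}$, so the pixel counts $\Phi_I/\psi_I$ and $\Theta_I/\psi_I$ are both upper bounded by sensor-determined constants (the number of beams per scan in each angular direction). Consequently $\log(\Phi_I/\psi_I)\log(\Theta_I/\psi_I)=\boldsymbol{\mathcal{O}}(1)$ with respect to the parameters $n$, $R$, $d$, and $D$ that appear in the theorem, and the bound in step (d) reduces to $\boldsymbol{\mathcal{O}}(n(R/d)\log(D/d))$. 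Combining with the dominated contributions from (a)--(c) yields the theorem. I would flag that the argument implicitly treats the LiDAR FoV and angular resolution as sensor constants, which is consistent with the convention used throughout the preceding complexity discussion.
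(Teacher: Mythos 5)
Your proposal is correct and follows essentially the same route the paper takes: bound the number of visited octree nodes by Lemma~\ref{lemma:nodes}, charge each visit with the per-node determination cost of Lemma~\ref{lemma:determination}, note that the hashing-grid insertion, rasterization, and segment-tree construction are dominated, and absorb the depth-image factors. Your explicit justification for collapsing the $\log(\Phi_I/\psi_I)\log(\Theta_I/\psi_I)$ term --- namely that (\ref{eq:res}) forces $\psi_I \geqslant \psi_{\mathtt{lidar}}$ so the pixel counts are bounded by sensor constants --- is exactly the step needed to land on the stated bound rather than one carrying extra logarithmic factors.
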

\begin{proof}
See Supplementary II-C\cite{cai2023supplementary}.
\end{proof}

We provide the time complexity of the occupancy updates on a grid-based map and an octree-based map as follows:
\begin{theorem}\label{theo:classic}
    The time complexity to update the occupancy states on a grid-based map is $\boldsymbol{\mathcal{O}}(\frac{nR}{d})$ and on an octree-based map is $\boldsymbol{\mathcal{O}}(\frac{nR}{d}\log(\frac{D}{d}))$.
\end{theorem}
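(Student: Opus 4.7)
The plan is to argue the two bounds essentially by counting operations along rays, relying on the worst-case configuration already depicted in Fig.~\ref{fig:worstcase}(b). The classical ray-casting pipeline processes each of the $n$ measured points by emitting a single ray from the sensor origin to the endpoint, traversing the intermediate cells with a 3D digital differential analyzer (e.g., Amanatides--Woo). I would first establish the per-ray cell count: any straight segment of length at most $R$ intersects at most $\boldsymbol{\mathcal{O}}(R/d)$ axis-aligned cubes of side $d$, with the bound being tight when the ray travels the full detection range. Summing over the $n$ rays yields a total of $\boldsymbol{\mathcal{O}}(nR/d)$ cell visits, and this is realized in the worst case sketched in Fig.~\ref{fig:worstcase}(b), where every point sits at the maximum range.

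For the grid-based map, each visited cell is updated in $\boldsymbol{\mathcal{O}}(1)$ time, since voxel indexing via the hashing function of~(\ref{eq:hash}) (or a plain 3D array) is constant-time. Multiplying the per-cell cost by the number of visits immediately gives the first claim, $\boldsymbol{\mathcal{O}}(nR/d)$. I would explicitly note that this bound is attained, not merely an upper bound, which justifies writing it as a tight worst-case complexity rather than just an $\boldsymbol{\mathcal{O}}$ statement.

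For the octree-based map, the same $\boldsymbol{\mathcal{O}}(nR/d)$ cells must be touched, but each update now requires descending the octree from the root to a leaf at the map resolution. Since the bounding box has side $D$ and leaves correspond to cubes of side $d$, the octree has height $\boldsymbol{\mathcal{O}}(\log_2(D/d)) = \boldsymbol{\mathcal{O}}(\log(D/d))$, and each descent (including any necessary node allocation or split along the way) costs $\boldsymbol{\mathcal{O}}(\log(D/d))$. Multiplying by the number of ray-visited cells yields the second claim $\boldsymbol{\mathcal{O}}(\tfrac{nR}{d}\log(\tfrac{D}{d}))$. I would briefly remark that this matches the term in Theorem~\ref{theo:update} \emph{without} the savings coming from the on-tree coarse-to-fine traversal: the classical pipeline pays the tree-descent cost \emph{per} cell visited by the ray, whereas the on-tree strategy amortizes it by determining occupancy at coarse resolutions first.

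The only subtle step is the per-ray cell-count bound of $\boldsymbol{\mathcal{O}}(R/d)$. I would justify it by observing that the number of axis-aligned grid planes crossed in each coordinate is at most $\lceil R/d\rceil + 1$, so the total number of face crossings along a straight segment is $\boldsymbol{\mathcal{O}}(R/d)$, and the number of cells visited is one more than the number of crossings. The main obstacle, if any, lies less in the arithmetic than in making explicit that these are worst-case (and achievable) bounds, and in stating the per-update cost on the octree (including possible on-the-fly node expansion) without hidden polylogarithmic factors beyond $\log(D/d)$.
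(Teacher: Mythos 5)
Your proposal is correct and follows essentially the same route as the paper: the worst case of Fig.~\ref{fig:worstcase}(b) gives $\boldsymbol{\mathcal{O}}(R/d)$ traversed cells per ray and hence $\boldsymbol{\mathcal{O}}(nR/d)$ cells over $n$ rays, after which each cell update costs $\boldsymbol{\mathcal{O}}(1)$ on a hashed grid and $\boldsymbol{\mathcal{O}}(\log(D/d))$ for a root-to-leaf octree descent, exactly as the paper's Remark on the two consecutive procedures (ray-casting plus map update) indicates. Your explicit plane-crossing justification of the per-ray cell count is a standard and valid way to make that step rigorous.
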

\begin{proof}
See Supplementary II-D\cite{cai2023supplementary}.
\end{proof}

\begin{remark} \rm{Although D-Map and the octree-based map have the same time complexity of $\boldsymbol{\mathcal{O}}(\frac{nR}{d}\log(\frac{D}{d}))$, their sources of time complexity are entirely different. Since D-Map updates the occupancy states concurrently with the tree traversal, as discussed in Section~\ref{subsec:update}, the time complexity of D-Map depends solely on the number of visited nodes $\mathcal{S}_{\mathtt{DMap}}$ during the on-tree update process, which contributes to $\boldsymbol{\mathcal{O}}(\frac{nR}{d}\log(\frac{D}{d}))$ directly. In contrast, occupancy updates on an octree-based map consist of two consecutive procedures: ray-casting, which contributes to $\boldsymbol{\mathcal{O}}(\frac{nR}{d})$, and map updates, which contribute to $\boldsymbol{\mathcal{O}}(\log(D/d))$, resulting in the same total time complexity of $\boldsymbol{\mathcal{O}}(\frac{nR}{d}\log(\frac{D}{d}))$.}

\end{remark}
\begin{remark}
    \rm{The time complexity of D-Map is calculated without considering the removal of known cells, which could reduce the number of visited nodes $\mathcal{S}_{\mathtt{DMap}}$ substantially. In contrast, existing ray-casting-based methods (either grid-based map or octree-based map) do not have such decremental properties. }
\end{remark}
\begin{remark}
    \rm{We derive the time complexity of occupancy updates for our D-Map and ray-casting-based methods (i.e., the grid-based and octree-based maps) based on their respective worst-case scenario, as illustrated in Fig.~\ref{fig:worstcase}. However, the occurring chances of their worst-case scenarios are different. Generally speaking, the worst-case scenario for ray-casting-based methods is more likely to occur in the real world (e.g., large open space) than for our D-Map.}
\end{remark}

\subsection{Occupancy State Query}
The time complexity of the occupancy state query in D-Map is provided as follows:
\begin{theorem}\label{theo:query}
    The time complexity of querying the occupancy state of a cell in D-Map is $\boldsymbol{\mathcal{O}}(\log(D/d))$.
\end{theorem}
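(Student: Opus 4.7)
The plan is to decompose the query procedure described in Section~\ref{subsec:query} into its two constituent steps and bound each separately. First, I would analyze the octree lookup: given a query point at map resolution $d$, the algorithm descends from the root node (whose cell has side length at most $D$) toward the leaf whose cell has side length $d$, at each level selecting the unique child containing the query point in $\boldsymbol{\mathcal{O}}(1)$ time. Because children have half the side length of their parents, the maximum descent depth is $\lceil \log_2(D/d)\rceil$, so the octree search costs $\boldsymbol{\mathcal{O}}(\log(D/d))$. A crucial observation is that the search can terminate early: if no node matching the query cell exists on the octree, then the region is known (per Section~\ref{subsec:query}), which only improves the bound rather than worsens it.

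Second, I would analyze the hashing grid lookup. Computing the hash key via equation (\ref{eq:hash}) is $\boldsymbol{\mathcal{O}}(1)$ since it involves only integer floor divisions, multiplications, and a modulus. The table lookup itself is expected $\boldsymbol{\mathcal{O}}(1)$ under the collision-minimizing choice of primes $\mathtt{P}$ and $\mathtt{Q}$ from Section~\ref{subsec:map}, following the standard voxel-hashing analysis of \cite{teschner2003optimized, niessner2013hashmap}.

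Summing the two contributions yields a total cost of $\boldsymbol{\mathcal{O}}(\log(D/d)) + \boldsymbol{\mathcal{O}}(1) = \boldsymbol{\mathcal{O}}(\log(D/d))$, which is the claimed bound. I do not anticipate a genuine obstacle here, since the argument reduces to the standard depth bound of a spatially subdivided octree combined with constant-time hashing; the only mild subtlety worth acknowledging in the write-up is that the decremental property of D-Map (removal of known cells during updates) can only shrink the effective tree depth traversed by a query, so the stated bound remains valid as a worst case throughout the lifetime of the map.
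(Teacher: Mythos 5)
Your proposal is correct and matches the paper's argument: the query cost is dominated by a root-to-leaf descent of the octree, whose height is bounded by $\lceil\log_2(D/d)\rceil$ with $\boldsymbol{\mathcal{O}}(1)$ work per level, plus an $\boldsymbol{\mathcal{O}}(1)$ voxel-hash lookup for the occupied check. Your added remarks on early termination when the node is absent and on the decremental property only tightening the bound are consistent with the paper's reasoning.
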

\begin{proof}
See Supplementary II-E\cite{cai2023supplementary}.
\end{proof}
\section{Benchmark Results}\label{sec:benchmark}
In this section, exhaustive benchmark experiments are conducted on various datasets with different LiDARs to evaluate the computational efficiency, accuracy, and memory consumption against other state-of-the-art occupancy mapping approaches. Besides the time performance evaluation, we conduct an ablation study based on the time complexity to provide deeper insight into the efficiency of our approach. 
\subsection{Datasets}
\begin{table}[t]
	\setlength{\tabcolsep}{6.7pt}
	\centering
	\caption{Details of Datasets for Benchmark Experiment}
	\label{tab:dataset}	
	\begin{threeparttable}
		\begin{tabular}{@{}cccc@{}}
			\toprule
			& Mapped Area            & Number of & Average Point Number \\
			& ($\mathrm{m}^3$)                & Scans    & (per scan)           \\ \midrule
			\textit{FR\_079}      & $48\times37\times5$    & 66       & 89445.9              \\
			\textit{Freiburg}     & $293\times168\times29$ & 81       & 247817.1             \\
			\textit{Workshop}     & $40\times40\times8$    & 7340     & 23540.8              \\
			\textit{MainBuilding} & $636\times251\times26$ & 1402     & 22124.9              \\
			\textit{Kitti\_04}    & $551\times160\times37$ & 271      & 125265.2             \\
			\textit{Kitti\_06}    & $616\times181\times47$ & 1101     & 122189.4             \\
			\textit{Kitti\_07}    & $362\times347\times40$ & 1101     & 121225.5             \\ \bottomrule
		\end{tabular}
	\end{threeparttable}
 \vspace{-0.2cm}
\end{table}
The experiments are conducted on three open datasets and one private dataset. The first public dataset is \textit{Kitti} dataset which captured depth measurements by a Velodyne HDL-64E rotating 3D laser scanner at \SI{10}{\hertz} \cite{geiger2013kitti}. Considering the tremendous memory consumption associated with grid-based maps that will be utilized in our benchmark experiments, we chose sequences \textit{Kitti\_04}, \textit{Kitti\_06}, and \textit{Kitti\_07} to conduct the benchmark evaluation. The second public dataset is an outdoor sequence \textit{MainBuilding} provided in FAST-LIO \cite{xu2021fastlio}, which used a semi-solid state 3D LiDAR sensor Livox Avia to collect data at \SI{10}{\hertz}. The third public dataset, provided in the work Octomap\cite{hornung2013octomap}, includes an indoor sequence \textit{FR-079} and an outdoor sequence \textit{Freiburg}. In addition, we evaluate the benefit of decremental property on a private dataset \textit{Workshop}, where a cluttered indoor environment is completely scanned manually using 3D LiDAR Livox Avia at \SI{10}{\hertz}. The reconstruction result of \textit{Workshop} dataset is available in \cite{kong2023marsim}. It is noticed that the odometry estimation for occupancy mapping in sequence \textit{Workshop} and \textit{MainBuilding} was acquired by our LiDAR-inertial odometry framework FAST-LIO2\cite{FASTLIO2}. Table \ref{tab:dataset} provides further details on the aforementioned datasets.
\subsection{Evaluation Setup}
We compared D-Map with state-of-the-art occupancy mapping methods commonly used in robotics applications, {including a grid-based method (Grid Map updated by 3DDDA\cite{1987raycast}), a tree-based method (Octomap\cite{hornung2013octomap}), and the extended versions of Grid Map and Octomap using super rays and culling regions\cite{kwon2019superray} (denoted with ``SR\&CR(Grid)" and ``SR\&CR(Octo)")}. These occupancy mapping methods were selected for their efficient updates and widespread usage in the field. For the Octomap, SR\&CR(Grid), and SR\&CR(Octo), we used their open-source implementations available on GitHub repositories\footnote{https://github.com/OctoMap/octomap}\footnote{https://github.com/PinocchioYS/SuperRay}. For the grid map, we modify the open-source SR\&CR(Grid) by disabling the super rays and culling regions and termed it as ``GridMap''. It is noticed that all grid-based maps employed voxel hashing to achieve better memory efficiency. The benchmark experiments are conducted at various map resolutions, ranging from a rough resolution of \SI{1.0}{\meter} to a finer resolution of \SI{1}{\cm} for indoor sequences. Due to system memory limitations, the minimum map resolution for outdoor sequences is set to \SI{5}{\cm}. 

The computation platform for evaluation is an Intel NUC computer with CPU Intel i7-10710U and \SI{64}{\giga\byte} RAM. To facilitate evaluation on high-resolution maps in large-scale environments, a \SI{64}{\giga\byte} swap space on Solid State Drive (SSD) is allocated to account for extensive memory usage. 

D-Map uses a fixed set of parameters for all benchmark experiments. The completeness threshold $\mathtt{\varepsilon}$ in Alg. \ref{alg:occupancy} is set to $0.8$, while the initial cell size $\mathtt{E}$ in Alg. \ref{alg:update} is set to \SI{5.0}{\m}. The depth image resolution is determined using (\ref{eq:res}) without relaxation {(i.e., $\gamma=1$, which is not greater than $\gamma_0$ and leads to $f(\gamma)=1$, as explained in Section~\ref{subsec:acc_analysis})}. The LiDAR detection range $R$ and LiDAR angular resolution $\psi_{\mathtt{lidar}}$ are acquired from LiDARs' manual sheets, except for \textit{Workshop} indoor sequence whose detection range is assigned as \SI{60}{\meter} in consideration of the indoor environment.

Given the high accuracy and low false alarm rate of LiDAR sensors, the space that is hit by a LiDAR pulse can be reliably considered occupied, and that passed by the laser ray can be considered free. We use the following parameter setup for other occupancy mapping approaches in the benchmark experiment to ensure consistency with this assumption that our D-Map has utilized. We set the probabilities for hit and miss by a ray (i.e., for occupied and free space) to $P_{\mathtt{occ}} = 0.9999$ and $P_{\mathtt{free}}=0.4999$, respectively, and the minimum and maximum clamping probabilities to $P_{\mathtt{min}} = 0.499$ and $P_{\mathtt{max}} = 0.9999$. Moreover, we set the initial occupancy probabilities of all cells to 0.5, which represents unknown cells without any update (or observation) from LiDAR points, and we regard the cells with occupancy probability values smaller than $0.5$ (due to subsequent LiDAR points update) as free and those with occupancy probability values larger than $0.5$ as occupied. Finally, the methods SR\&CR(Grid) and SR\&CR(Octo) both require a threshold parameter $k$ for the minimum number to generate super rays, which is set to the default value $k=20$ as in \cite{kwon2019superray}.

\subsection{Efficiency Evaluation and Analysis}
\subsubsection{Benchmark Results}
\begin{table*}[t]
	
	\centering
	\caption{Comparison of Update Time (ms) at Different Resolution}
	\label{tab:eff}
	\setlength{\tabcolsep}{3.5pt}
 \renewcommand*{\arraystretch}{1.0}
	\begin{threeparttable}
		\begin{tabular}{@{}cccccccccccccccc@{}}
			\toprule
			\multicolumn{1}{l}{} & \multicolumn{5}{c}{\textit{Kitti\_04}}                                                 & \multicolumn{5}{c}{\textit{Kitti\_06}}                                                & \multicolumn{5}{c}{\textit{Kitti\_07}}                                               \\\cmidrule(l){2-6}  \cmidrule(l){7-11} \cmidrule(l){12-16}
			Resolution (m)       & 1.0            & 0.5            & 0.25            & 0.1             & 0.05             & 1.0            & 0.5            & 0.25           & 0.1             & 0.05             & 1.0            & 0.5            & 0.25           & 0.1             & 0.05            \\ \midrule
			GridMap                & 47.32          & 89.76          & 190.78          & 720.33          & 3636.54          & 50.22          & 97.70          & 203.56         & 1110.29         & 10505.13         & 37.04          & 68.64          & 140.28         & 536.51          & 3431.81         \\
			Octomap              & 398.52         & 637.52         & 964.52          & 2459.13         & 7623.91          & 437.43         & 664.70         & 1076.62        & 2629.29         & 6558.01          & 299.72         & 469.21         & 775.76         & 1777.65         & 3877.65         \\
			SR\&CR(Grid)          & 43.08          & 129.51         & 298.76          & 1398.72         & 7360.95          & 48.86          & 116.96         & 322.30         & 2422.06         & 17977.50         & 30.17          & 72.91          & 180.42         & 995.12          & 6290.95         \\
			SR\&CR(Octo)           & 44.47          & 125.31         & 389.37          & 2330.39         & 9287.51          & 56.49          & 161.41         & 554.35         & 2974.88         & 12361.56         & 30.05          & 85.18          & 307.54         & 1634.40         & 6153.84         \\
			Ours                 & \textbf{22.67} & \textbf{39.05} & \textbf{101.14} & \textbf{470.67} & \textbf{1905.32} & \textbf{22.71} & \textbf{36.08} & \textbf{85.91} & \textbf{367.08} & \textbf{1371.69} & \textbf{19.12} & \textbf{27.27} & \textbf{59.33} & \textbf{209.46} & \textbf{779.32} \\ 
		\end{tabular}
	\end{threeparttable}	
	\setlength{\tabcolsep}{8.2pt}
	\begin{threeparttable}
            \begin{tabular}{@{}ccccccccccccc@{}}
            \toprule
            \multicolumn{1}{l}{} & \multicolumn{7}{c}{\textit{Workshop}}                                                                              & \multicolumn{5}{c}{\textit{MainBuilding}}                                           \\ \cmidrule(l){2-8}  \cmidrule(l){9-13} 
            Resolution (m)       & 1.0           & 0.5           & 0.25          & 0.1            & 0.05           & 0.025          & 0.01            & 1.0           & 0.5            & 0.25           & 0.1             & 0.05            \\ \midrule
            GridMap             & 4.94          & 8.42          & 15.97         & 50.17          & 127.43         & 346.68         & 13702.461       & \textbf{9.63} & \textbf{19.91} & \textbf{45.88} & \textbf{162.65} & \textbf{640.83} \\
            Octomap              & 42.21         & 59.62         & 92.76         & 175.17         & 297.93         & 589.07         & 7568.64         & 64.82         & 106.43         & 199.05         & 529.66          & 1333.22         \\
            SR\&CR(Grid)      & \textbf{2.64} & 7.35          & 18.65         & 72.68          & 244.95         & 1514.41        & 39624.96        & 10.86         & 27.65          & 72.61          & 315.12          & 1384.12         \\
            SR\&CR(Octo)       & \textbf{2.64} & 9.29          & 29.96         & 162.54         & 717.84         & 3352.30        & 50174.95        & 12.61         & 44.12          & 156.80         & 908.08          & 3783.49         \\
            Ours                 & 3.02          & \textbf{4.45} & \textbf{9.26} & \textbf{20.10} & \textbf{42.12} & \textbf{88.16} & \textbf{915.20} & 11.16         & 31.56          & 112.52         & 332.83          & 991.66          \\
            \end{tabular}
	\end{threeparttable}	
	\begin{threeparttable}
		\begin{tabular}{@{}ccccccccccccc@{}}
			\toprule
			\multicolumn{1}{l}{} & \multicolumn{7}{c}{\textit{FR\_079}}                                                                                 & \multicolumn{5}{c}{\textit{Freibrug}}                                                 \\ \cmidrule(l){2-8}  \cmidrule(l){9-13} 
			Resolution (m)       & 1.0           & 0.5           & 0.25           & 0.1            & 0.05           & 0.025           & 0.01            & 1.0            & 0.5            & 0.25           & 0.1             & 0.05             \\ \midrule
			GridMap             & 10.14         & 15.47         & 26.47          & 63.77          & 152.04         & 438.84          & 4082.14         & 61.41          & 109.99         & 230.90         & 1021.84         & 6005.56          \\
			Octomap              & 3.89          & 7.90          & 17.45          & 66.48          & 347.57         & 1514.97         & 7568.64         & 356.43         & 548.27         & 990.92         & 3357.36         & 10637.87         \\
			SR\&CR(Grid)      & \textbf{3.76} & 7.91          & 16.34          & 53.34          & 237.25         & 816.58          & 4805.09         & 40.72          & 110.27         & 285.78         & 1522.21         & 7911.33          \\
			SR\&CR(Octo)       & 3.87          & \textbf{7.70} & 18.28          & 68.21          & 375.14         & 1613.46         & 8402.84         & 43.68          & 112.59         & 354.07         & 2186.24         & 8709.99          \\
			Ours                 & 10.09         & 11.29         & \textbf{15.11} & \textbf{29.01} & \textbf{65.22} & \textbf{205.06} & \textbf{756.33} & \textbf{31.40} & \textbf{40.48} & \textbf{82.55} & \textbf{678.09} & \textbf{3599.96} \\ \bottomrule
		\end{tabular}
	\end{threeparttable}
 \vspace{-0.3cm}
\end{table*}
We summarize the average update time of each occupancy mapping approach and report the results in Table \ref{tab:eff}. The update time for SR\&CR(Grid) and SR\&CR(Octo) includes the time required for preprocessing, which involves generating super rays and constructing culling regions, as well as the time for occupancy updates involving ray-casting and map updates. In contrast, for GridMap and Octomap, only the time for ray-casting and map updates is considered, as no preprocessing is required. As for our D-Map, we count the preprocessing time for depth image rasterization and 2-D segment tree construction as well as the occupancy update time for updating the octree and hashing grid map. As presented in Table~\ref{tab:eff}, our D-Map consistently outperforms the other mapping approaches across various map resolutions in three sequences of the \textit{Kitti} dataset. Particularly, at a map resolution of \SI{5}{\cm} in \textit{Kitti\_06}, D-Map achieves a remarkable speedup of $7.66$ times and $4.78$ times faster than GridMap and Octomap, respectively. However, in the \textit{MainBuilding} datasets, GridMap shows the highest efficiency at different map resolutions, followed by our D-Map and {SR\&CR(Grid)}. The lower efficiency of D-Map is caused by the sparse clouds in \textit{MainBuilding} sequence, which only has an average number of \SI{22}{k} per scan, while \textit{Kitti} datasets have over \SI{120}{k} points per scan on average. This finding suggests that our method is less efficient in updating sparse point clouds since a sparse depth image rasterized from the sparse point clouds causes redundant occupancy state queries on those unobserved regions during the updating process. In contrast, ray-casting-based mapping approaches exhibit better performance due to their precise traversal of observed regions. However, this disadvantage of D-Map could be compensated by the decremental property as shown in the \textit{Workshop} indoor sequence, where the scanning process frequently visited the previously explored areas with an aim to map the entire indoor environment exhaustively. This scanning process, which is often used in mapping applications, enables our D-Map to continuously remove known space from the octree map, achieving higher efficiency than the other methods. Fig.~\ref{fig:workshop} presents the update time consumption of different occupancy approaches at a \SI{1}{\cm} map resolution, clearly illustrating the decreasing update time of D-Map as the environment is explored.

\begin{figure}[t]
	\setlength\abovecaptionskip{-0.1\baselineskip}
	\centering
	\includegraphics[width=\linewidth]{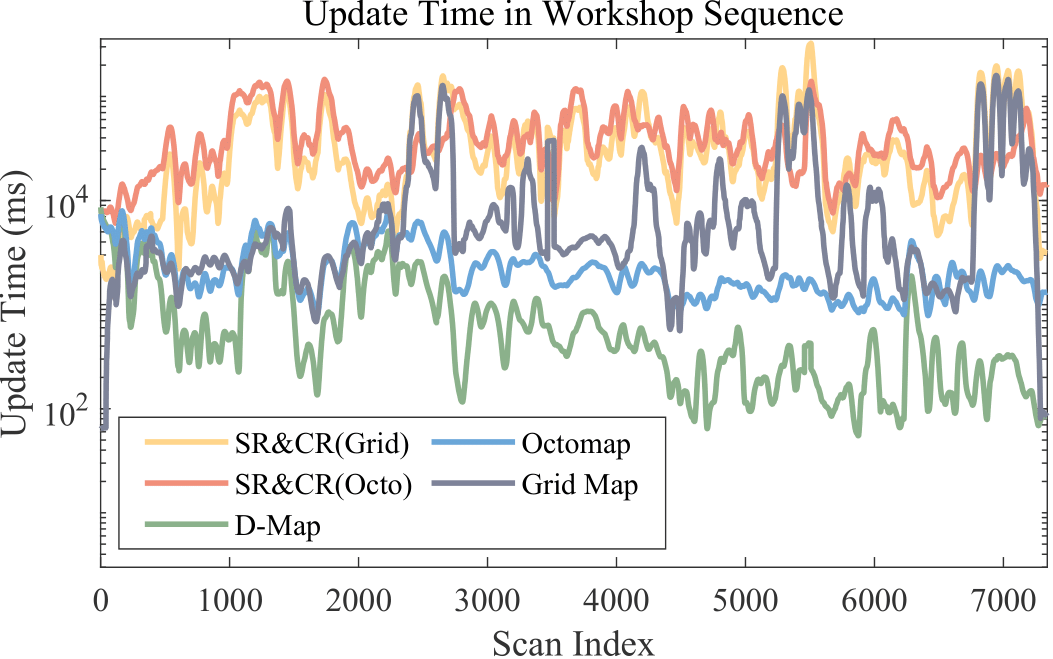}
	\caption{Update time in \textit{Workshop} indoor sequence at a high map resolution of \SI{1}{\cm}. }
	\label{fig:workshop} 
    \vspace{-0.3cm}
\end{figure} 
D-Map demonstrates superior performance compared to other approaches at high map resolutions ($d \leqslant \SI{0.25}{\meter}$) in the indoor sequence \textit{FR\_079}, as well as at various resolutions in the outdoor sequence \textit{Freiburg}. However, at low resolutions ($d \geqslant \SI{0.5}{\meter}$) in \textit{FR\_079} sequence, the performance is limited by the preprocessing time required for depth image rasterization (approximately \SI{5.7}{\ms}) and the processing time to insert occupied points into the hashing grid map (about \SI{4.2}{\ms}).

Across all benchmark experiments conducted in this study, we have observed that SR\&CR(Grid) and SR\&CR(Octo) exhibit low efficiency at high resolution compared to GridMap and Octomap, respectively, without super rays or culling regions. This phenomenon has been previously noted and discussed in \cite{kwon2019superray}, where it is attributed to the low grouping ratio of super rays and the need for the timely rebuilding of culling regions at high resolution. Additionally, we have also observed anomalous time performances when comparing the grid-based method (i.e., GridMap and SR\&CR(Grid)) against the tree-based methods (i.e., Octomap and SR\&CR(Octo)). Normally, grid-based maps are expected to offer superior update efficiency compared to tree-based maps. However, we have observed that grid-based methods are slower than tree-based methods at high resolution ($d=$\SI{5}{\cm}) in the sequences \textit{Kitti\_06} and \textit{Kitti\_07}. The inefficiency of grid-based methods can be attributed to their reliance on swap space on the SSD during the update process since their memory usage exceeds the capacity of RAM. As memory access on SSD is significantly slower than RAM, this reliance on swap space can lead to the observed slower update performance.
\subsubsection{Efficiency Analysis}
We present a comprehensive analysis of our superior efficiency performance based on the time complexity analysis in Section~\ref{sec:timeanalysis}. As demonstrated in Section \ref{sec:timeanalysis}, the efficiency of D-Map is solely determined by the number of visited cells $\mathcal{S}_{\mathtt{DMap}}$ due to its ``on-tree" update strategy. However, GridMap, Octomap, SR\&CR(Grid), and SR\&CR(Octo), which follow the classical occupancy update pipeline using ray-casting, have an efficiency that depends on two factors: the number of cells $\mathcal{S}_{\mathtt{rc}}$ traversed by rays and the time complexity for map updates, which is $\boldsymbol{\mathcal{O}}(1)$ for grid-based methods and $\boldsymbol{\mathcal{O}}(\log(D/d))$  for octree-based methods. Therefore, we conduct a comparison between the number of visited cells $\mathcal{S}_{\mathtt{DMap}}$ in D-Map and the number of updated cells $\mathcal{S}_{\mathtt{rc}}$ in the ray-casting-based methods, followed by a comparison of the corresponding occupancy update time (without counting the preprocessing time). Specifically, we opt to compare D-Map with SR\&CR(Grid) and SR\&CR(Octo), which incorporate super rays and culling region techniques to reduce the number of cells. The comparison is carried out on \textit{FR\_079} and \textit{Freiburg} sequences, with the results presented in Fig.~\ref{fig:grids}.

In comparison between $\mathcal{S}_{\mathtt{DMap}}$ and $\mathcal{S}_{\mathtt{rc}}$, we conduct an ablation study to investigate the performance of all methods (i.e., SR\&CR(Grid), SR\&CR(Octo), and D-Map) without any removal of cells. For our D-Map, we disable the removal of known cells from the octree. For SR\&CR(Grid) and SR\&CR(Octo), we disable the culling region technique and denote these modified methods as ``SR(Grid)'' and ``SR(Octo)'', respectively. As shown in Fig.~\ref{fig:grids}, SR(Grid) exhibits the highest count of cells across all resolutions. followed by SR(Octo) and our D-Map without removal. The difference in the number of updated cells between SR(Grid) and SR(Octo) is attributed to the batching-based method in SR(Octo) that batches the uniform cells traversed by multiple rays to reduce the cells. The difference between the two methods becomes smaller at higher resolution as there are fewer uniform cells traversed by multiple rays. It is worth noticing that, although batching is essential to tree-based methods to account for the cost of tree updates, it is unnecessary for grid-based maps, which have an efficient update in simple time complexity of $\boldsymbol{\mathcal{O}}(1)$. As analyzed in Section~\ref{sec:timeanalysis}, D-Map has a theoretical number of visited nodes $\mathcal{S}_{\mathtt{DMap}}=\boldsymbol{\mathcal{O}}(\frac{nR}{d}\log(\frac{D}{d}))$ in its worst case, which is larger than the theoretical number of traversed cells $\mathcal{S}_\mathtt{rc}=\boldsymbol{\mathcal{O}}(\frac{nR}{d})$ in the worst case of ray-casting-based methods. However, in this ablation study, D-Map without removal visits a comparable number of nodes to SR(Octo) because the worst case of D-Map rarely occurs. Specifically, D-Map without removal processes fewer nodes than SR(Octo) at low resolutions (i.e., $d\geqslant 0.25\mathrm{m}$) in indoor sequence \textit{FR\_079} and at all resolutions except \SI{5}{\cm} in outdoor sequence \textit{Freiburg}. When approaching high resolutions (e.g., $d<0.25\mathrm{m}$ in indoor sequence and $d=0.05\mathrm{m}$ in outdoor sequence), the number of visited nodes $\mathcal{S}_\mathtt{DMap}$ on D-Map is closer to its worst case, which has a higher increasing rate on a logarithmic scale than $\mathcal{S}_\mathtt{rc}$. 

The removal of known cells and the culling region technique are then enabled in corresponding methods for further investigation. D-Map demonstrates the most significant reduction in the number of cells among all approaches when the removal technique is enabled, while the culling region technique only slightly lowers the number of updated cells in SR\&CR(Grid) and SR\&CR(Octo).
\begin{figure*}[t]
    \setlength\abovecaptionskip{-0.1\baselineskip}
	\centering
	\includegraphics[width=\textwidth]{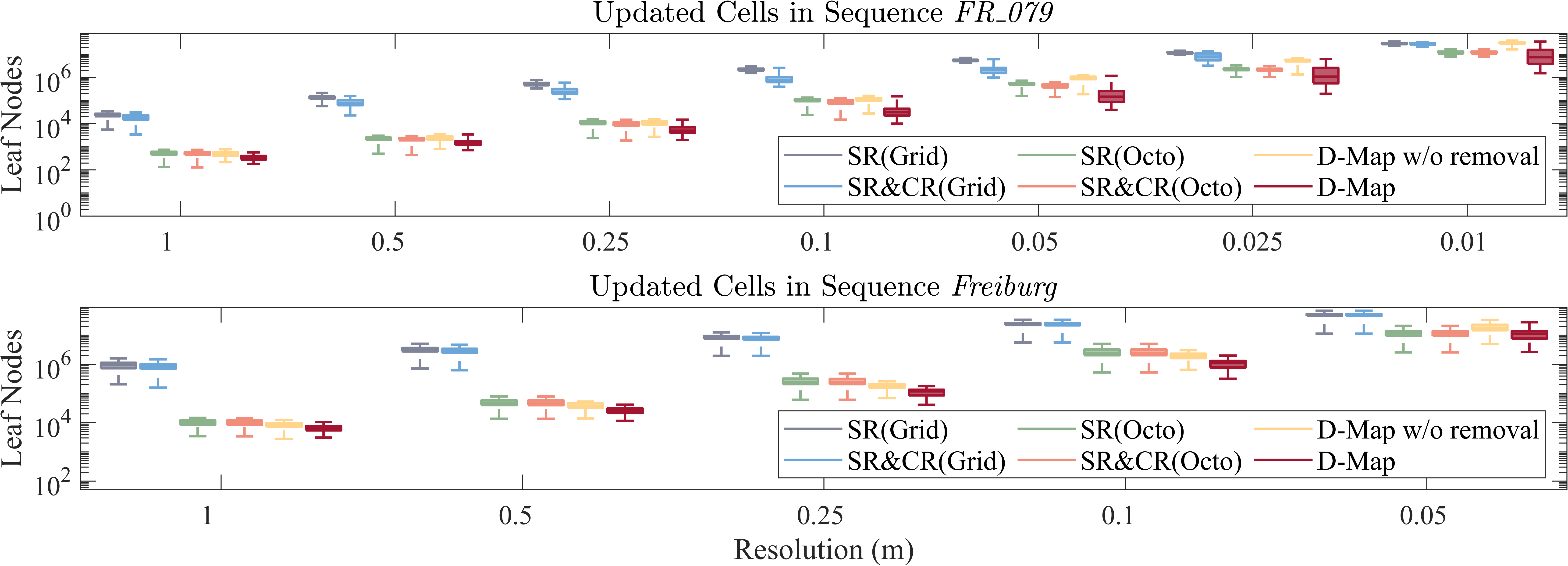}
	\caption{The number of cells to be updated in sequences \textit{FR\_079} and \textit{Freiburg}. }
	\label{fig:grids} 
     \vspace{-0.2cm}
\end{figure*}

\begin{figure*}[t]
    \setlength\abovecaptionskip{-0.1\baselineskip}
	\centering
	\includegraphics[width=\textwidth]{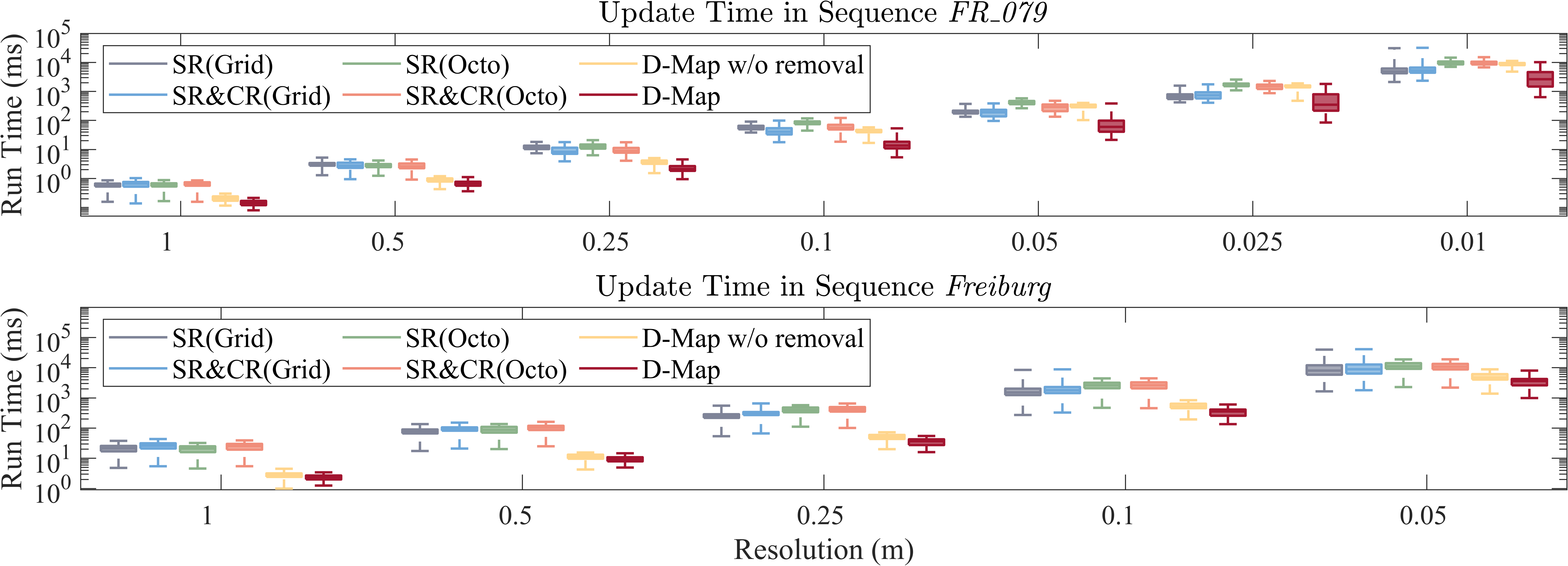}
	\caption{The update time in sequences \textit{FR\_079} and \textit{Freiburg}. }
	\label{fig:updatetime} 
    \vspace{-0.3cm}
\end{figure*}

Finally, we compare the time consumption for occupancy updates in both sequences, as shown in Fig. \ref{fig:updatetime}. When disabling the removal of known cells in D-Map and the culling region technique, the results reveal that D-Map without removal outperforms the others across all map resolutions in sequence \textit{Freiburg}, as well as at low resolutions (i.e., $d>5\mathrm{cm}$) in sequence \textit{FR\_079}. At high resolutions (i.e.,  $d\leqslant0.5\mathrm{m}$) in sequence \textit{FR\_079}, SR(Grid) achieves better performance than D-Map without removal due to a smaller number of cells to be updated. However, despite having a slightly larger number of visited nodes, D-Map without removal maintains its leading position over SR(Octo), which is hindered by the timely updates on the tree structure. When enabling the removal of known cells, D-Map demonstrates the highest efficiency across different map resolutions, owing to the fewest cells requiring to be updated.

\subsection{Accuracy and Memory Evaluation}
\subsubsection{Accuracy Benchmark}
\renewcommand*{\arraystretch}{1.0}
\begin{table*}
	\setlength{\tabcolsep}{4.0pt}
	\caption{Mapping Accuracy at Different Resolution}
	\label{tab:acc}
	\begin{threeparttable}
            \begin{tabular}{@{}ccccccccccccccc@{}}
            \toprule
            \multicolumn{1}{l}{}  & \multicolumn{7}{c}{Unknown Space}                                   & \multicolumn{7}{c}{Free Space}                                      \\ \cmidrule(l){2-8}  \cmidrule(l){9-15}  
            Resolution (m)        & 1.0     & 0.5     & 0.25    & 0.1     & 0.05    & 0.025   & 0.01    & 1.0     & 0.5     & 0.25    & 0.1     & 0.05    & 0.025   & 0.01    \\ \midrule
            \textit{Kitti\_04}    & 97.33\% & 98.29\% & 99.67\% & 99.77\% & 99.46\% & -       & -       & 98.54\% & 98.49\% & 96.73\% & 94.61\% & 93.34\% & -       & -       \\
            \textit{Kitti\_06}    & 97.77\%  & 98.62\% & 99.87\% & 99.84\% & 99.24\% & -       & -       & 98.80\% & 98.77\% & 97.21\% & 94.99\% & 94.53\% &         &         \\
            \textit{Kitti\_07}    & 97.68\% & 99.50\% & 99.64\% & 99.87\% & 99.40\% & -       & -       & 98.40\% & 98.36\% & 96.91\% & 94.93\% & 94.24\% &         &         \\
            \textit{FR\_079}      & 83.53\% & 96.46\% & 97.86\% & 98.96\% & 99.66\% & 99.17\% & 95.39\% & 95.06\% & 95.72\% & 96.55\% & 97.88\% & 95.14\% & 91.68\% & 93.56\% \\
            \textit{Freiburg}     & 97.79\% & 98.93\% & 99.39\% & 99.91\% & 99.96\% & -       & -       & 98.45\% & 98.61\% & 98.80\% & 92.60\% & 83.28\% & -       & -       \\
            \textit{Workshop}     & 98.68\% & 99.19\% & 99.61\% & 99.82\% & 99.87\% & 99.95\% & 99.99\% & 95.11\% & 96.74\% & 98.08\% & 98.80\% & 99.01\% & 98.90\% & 99.00\% \\
            \textit{MainBuilding} & 99.51\% & 99.71\% & 99.77\% & 99.58\% & 99.80\% & -       & -       & 97.65\% & 97.56\% & 97.08\% & 95.10\% & 87.76\% & -       & -       \\ \bottomrule
            \end{tabular}
	\end{threeparttable}
	\centering
 \vspace{-0.3cm}
\end{table*}
We conduct accuracy benchmark experiments using the mapping results from Octomap\cite{hornung2013octomap} as ground truth. Specifically, we calculate the accuracy of D-Map by counting the cells with correct occupancy states over the total number of cells inside the mapping area and with the corresponding states. The benchmark results of unknown space and free space are presented in Table \ref{tab:acc}. It is noted that the accuracy of occupied space is very close to \SI{100}{\percent} in all experiments and thus not presented in the table. This is due to our high confidence in occupied space based on the assumption of high accuracy and low false alarm rate of LiDAR sensors.

The overall accuracy performance of D-Map shows a slight accuracy loss compared to Octomap, as reported in Table \ref{tab:acc}. The inaccuracy mainly arises from the occupancy state determination module on the depth image. At low resolutions (i.e., $d\geqslant0.5\mathrm{m}$), the accuracy of unknown space is relatively lower due to the more severe discretization error introduced in (\ref{eq:area}) when determining the projected area on a lower-resolution depth image. The results also show a decreasing accuracy of free space when the map resolution grows smaller. This decreasing accuracy is introduced by projecting a cell by its circumsphere on the depth image. When a ray crosses a large cell only at its corner but not intersects with its circumsphere, this large cell is incorrectly decided as not intersecting with any rays without further splitting and query. Thus, the small cells at the corner space are falsely determined as unknown. With a higher map resolution, more small free cells in the corner space are determined as unknown, leading to a decreasing accuracy. To improve the accuracy of free space for such cases, it is recommended to use a smaller initial cell size $\mathtt{E}$ to avoid missing the cell corner. However, a trade-off must be considered as a smaller $\mathtt{E}$ might miss the opportunity to update on large cells, leading to slower occupancy updates. 

Notably, since we utilize Octomap as ground truth, the discretization error in Octomap also contributes to the accuracy loss in our benchmark experiment. Besides, despite the quantization error, Octomap has been widely used in real-world applications. Therefore, the accuracy achieved in our experiment, as reported in Table~\ref{tab:acc}, is adequate for various practical scenarios, as demonstrated in Section~\ref{sec:experiment}.

\subsubsection{Memory Consumption}
We evaluated the memory consumption of each mapping approach and reported the results in Table~\ref{tab:mem}. Given that SR\&CR(Grid) and SR\&CR(Octo) utilize the same map structure as GridMap and Octomap with equivalent mapping results, we only compare the memory consumption of D-Map against GridMap and Octomap. As shown in Table \ref{tab:mem}, Octomap exhibits the lowest memory consumption in most experiments, while GridMap consumes the highest. The memory consumption of D-Map follows that of Octomap closely. The excess memory consumption of D-Map mainly arises from using hashing grid map for maintaining occupied space. When high-resolution maps are built in large-scale environments (e.g., \SI{5}{cm} in \textit{Kitti} datasets), our D-Map exhibits better memory efficiency than Octomap due to its decremental property as well as the proper combination of grid-based and tree-based structures for different occupancy states.
\renewcommand*{\arraystretch}{1.0}
\begin{table*}[t]

	\centering
	\caption{Comparison of Memory Consumption (MB) at Different Resolution}
	\label{tab:mem}
	\setlength{\tabcolsep}{3.2pt}
	\begin{threeparttable}
            \begin{tabular}{@{}cccccccccccccccc@{}}
            \toprule
            \multicolumn{1}{l}{} & \multicolumn{5}{c}{\textit{Kitti\_04}}                                                  & \multicolumn{5}{c}{\textit{Kitti\_06}}                                                   & \multicolumn{5}{c}{\textit{Kitti\_07}}                                                  \\ \cmidrule(l){2-6}  \cmidrule(l){7-11} \cmidrule(l){12-16}
            Resolution (m)       & 1.0           & 0.5            & 0.25            & 0.1              & 0.05              & 1.0            & 0.5            & 0.25            & 0.1              & 0.05              & 1.0           & 0.5            & 0.25            & 0.1              & 0.05              \\ \midrule
            Grid Map             & 13.00         & 97.11          & 751.00          & 6964.45          & 33012.05          & 25.32          & 194.32         & 1532.03         & 14586.55         & 110589.78         & 22.49         & 105.35         & 894.36          & 7616.39          & 56745.76          \\
            Octomap              & \textbf{6.49} & \textbf{32.79} & \textbf{181.97} & \textbf{2212.85} & 14594.72          & \textbf{10.87} & \textbf{57.29} & \textbf{340.08} & \textbf{4587.48} & 31780.61          & \textbf{8.66} & \textbf{44.24} & \textbf{247.52} & \textbf{2858.18} & 18505.91          \\
            Ours                 & 18.19         & 75.05          & 282.98          & 2377.75          & \textbf{11990.52} & 26.36          & 135.79         & 620.75          & 4726.50          & \textbf{24082.73} & 20.80         & 89.68          & 402.93          & 3417.51          & \textbf{15315.92} \\
            \end{tabular}
	\end{threeparttable}
	\setlength{\tabcolsep}{8.1pt}
        \renewcommand*{\arraystretch}{1.0}        
	\begin{threeparttable}
            \begin{tabular}{@{}ccccccccccccc@{}}
            \toprule
            \multicolumn{1}{l}{} & \multicolumn{7}{c}{\textit{Workshop}}                                                                                  & \multicolumn{5}{c}{\textit{MainBuilding}}                                               \\ \cmidrule(l){2-8}  \cmidrule(l){9-13} 
            Resolution (m)       & 1.0           & 0.5           & 0.25          & 0.1            & 0.05            & 0.025           & 0.01              & 1.0           & 0.5            & 0.25            & 0.1              & 0.05              \\ \midrule
            Grid Map             & 0.18          & 1.32          & 6.50          & 102.35         & 826.43          & 6719.52         & 109724.66         & 22.31         & 106.70         & 817.38          & 7328.23          & 31258.35          \\
            Octomap              & \textbf{0.11} & \textbf{0.52} & \textbf{2.81} & \textbf{27.39} & \textbf{149.20} & \textbf{803.47} & \textbf{10619.32} & \textbf{5.83} & \textbf{33.28} & 261.84          & 3359.45          & \textbf{17566.79} \\
            Ours                 & 0.31          & 1.80          & 8.53          & 72.56          & 338.94          & 1516.09         & 11453.62          & 7.90          & 33.45          & \textbf{187.56} & \textbf{3038.41} & 26168.12         \\ 
            \end{tabular}
 \end{threeparttable}
	\setlength{\tabcolsep}{8.5pt}
	\begin{threeparttable}
            \begin{tabular}{@{}ccccccccccccc@{}}
            \toprule
            \multicolumn{1}{l}{} & \multicolumn{7}{c}{\textit{FR\_079}}                                                                                 & \multicolumn{5}{c}{\textit{Freibrug}}                                                   \\ \cmidrule(l){2-8}  \cmidrule(l){9-13} 
            Resolution (m)       & 1.0           & 0.5           & 0.25          & 0.1            & 0.05           & 0.025           & 0.01             & 1.0           & 0.5            & 0.25            & 0.1              & 0.05              \\ \midrule
            Grid Map             & 0.16          & 0.71          & 5.36          & 49.99          & 391.92         & 1921.08         & 26432.80         & 11.60         & 56.30          & 432.26          & 6594.09          & 30490.11          \\
            Octomap              & \textbf{0.08} & \textbf{0.34} & \textbf{1.64} & \textbf{14.32} & \textbf{99.17} & 797.95          & 9991.86          & \textbf{4.23} & \textbf{21.61} & \textbf{132.60} & 2012.67          & 14453.00          \\
            Ours                 & 0.25          & 1.02          & 4.38          & 26.89          & 120.96         & \textbf{600.16} & \textbf{3289.48} & 11.00         & 45.67          & 189.19          & \textbf{1721.62} & \textbf{14026.21} \\ \bottomrule
            \end{tabular}
	\end{threeparttable}
\vspace{-0.3cm}
\end{table*}

\section{Real-world Applications}
\label{sec:experiment}
We demonstrate the high efficiency of D-Map in two applications that require real-time high-resolution occupancy mapping on a high-resolution LiDAR. 

\subsection{Interactive Guidance for High-resolution Real-time 3D Mapping}\label{subsec:handheld}
With the recent emergence of 3D applications such as metaverse\cite{mystakidis2022metaverse,metaverse2022survey}, virtual and augmented reality\cite{cipresso2018VRAR}, and physical simulators\cite{shah2018airsim,kong2023marsim}, the demand for accurate and detailed 3D reconstructions of real-world environments has increased. The accuracy and completeness of such reconstructions depend heavily on the quality of the data collection process. To overcome this challenge, we have developed an interactive guidance system that leverages our D-Map to achieve high-resolution real-time 3D mapping. The system offers users information on the explored and unexplored areas, along with suggestions for the next mapping region. By utilizing this information, users can avoid rescanning the same areas repeatedly or skipping any unscanned areas, thereby improving their overall work efficiency. A video demonstrating the use of the system is available on YouTube: \href{https://youtu.be/m5QQPbkYYnA?t=251}{\tt youtu.be/m5QQPbkYYnA?t=251}.
\subsubsection{Experiment Setup}
\begin{figure}[t]
	\setlength\abovecaptionskip{-0.1\baselineskip}
	\centering
	\includegraphics[width=\linewidth]{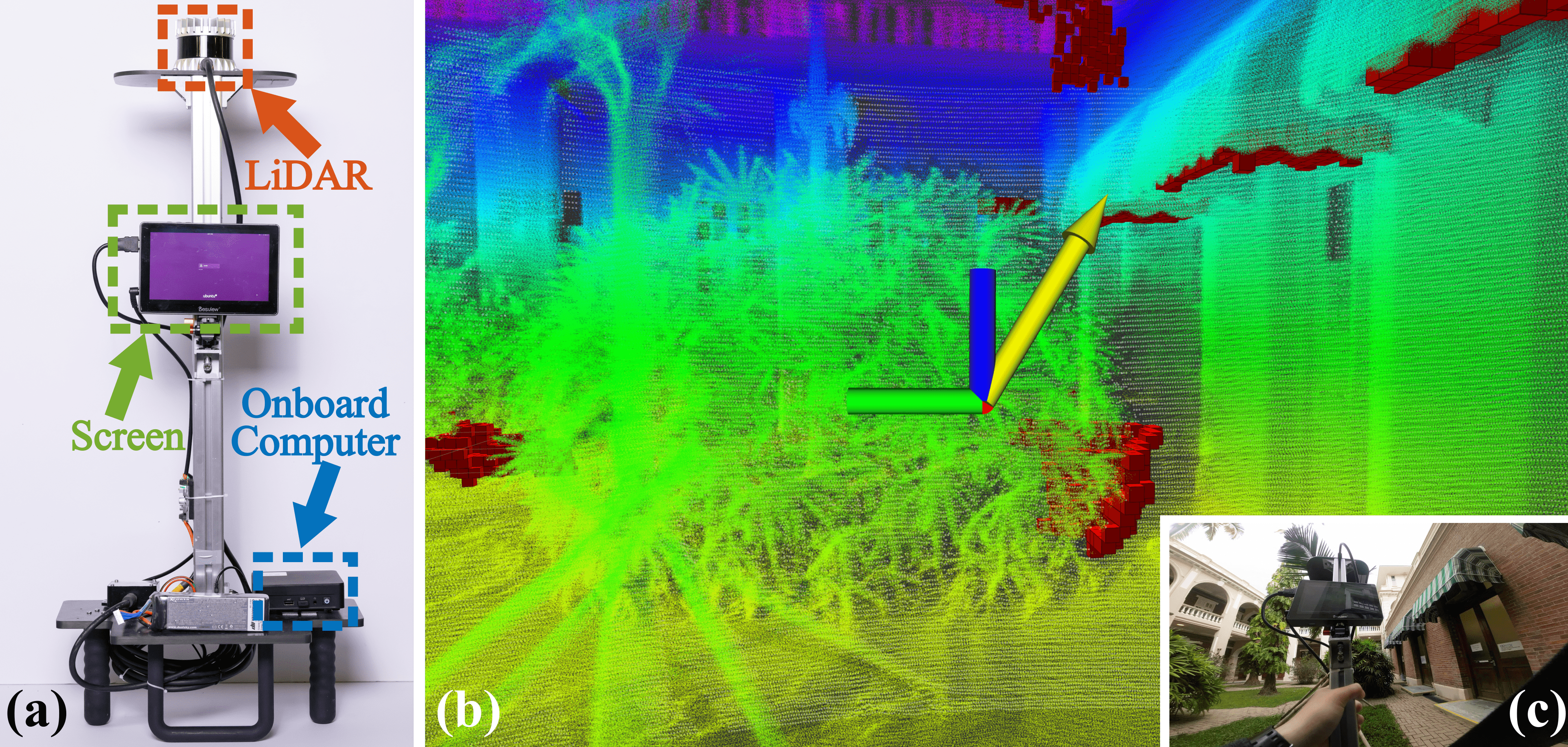}
	\caption{(a) Hardware setup of the handheld device for high-resolution 3D mapping, including an onboard computer (blue dashed block), a high-resolution LiDAR (orange dashed block), and a screen for online visualization (green dashed block). (b) A screenshot of the online visualization for interactively guiding the mapping process. The red cubes are the frontiers that users need to eliminate by scanning. The yellow arrow indicates the direction to the next suggested frontier for scanning. The depth measurements are accumulated and visualized by height value on the screen. (c) The first-person view of the user to conduct 3D mapping using our handheld device. }
	\label{fig:hardware} 
\end{figure} 

A handheld device is designed to conduct high-resolution 3D mapping, as shown in Fig.~\ref{fig:hardware}(a). The handheld device is equipped with an Intel NUC onboard computer with an i9-8550U CPU and \SI{64}{\giga\byte} memory, and an OS1-128 LiDAR integrated with an IMU. The OS1-128 LiDAR can output $2,621,440$ high-precision point clouds per second with a maximum detection range of \SI{120}{\meter} and a \SI{360}{\degree}$\times$\SI{45}{\degree} field of view. The frame rate of the OS1-128 LiDAR is set to \SI{10}{\hertz}.

We develop an interactive guidance system that consists of three modules: localization, mapping, and guidance. The localization module provides $6$ DoF sensor pose estimation using our previous work FAST-LIO2\cite{FASTLIO2}. Additionally, the point cloud acquired at each frame is compensated for motion distortion in FAST-LIO2 and transformed from the LiDAR frame to the world frame. The mapping module leverages our D-Map to update the occupancy map in real-time using the estimated LiDAR pose and the currently registered point cloud. Finally, the guidance module visualizes the frontiers, which correspond to the unknown space adjacent to the free space in the occupancy map, and suggests a direction to the user for complete scanning, as shown in Fig.~\ref{fig:hardware}(b).

The experiment aims to reconstruct an area measuring \SI{43}{\meter}$\times$\SI{19}{\meter}$\times$\SI{9}{\meter} located in Main Building, University of Hong Kong. We use a map resolution of \SI{10}{\cm} for occupancy mapping. The relax factor for D-Map is set to $\gamma = 1$ for full accuracy. In addition, we set the completeness threshold $\mathtt{\varepsilon}$ in Alg. \ref{alg:occupancy} to $0.8$, and the initial cell size $\mathtt{E}$ to \SI{1.6}{\meter}.

\subsubsection{Results}
\begin{figure}[t]
	\setlength\abovecaptionskip{-0.1\baselineskip}
	\centering
	\includegraphics[width=\linewidth]{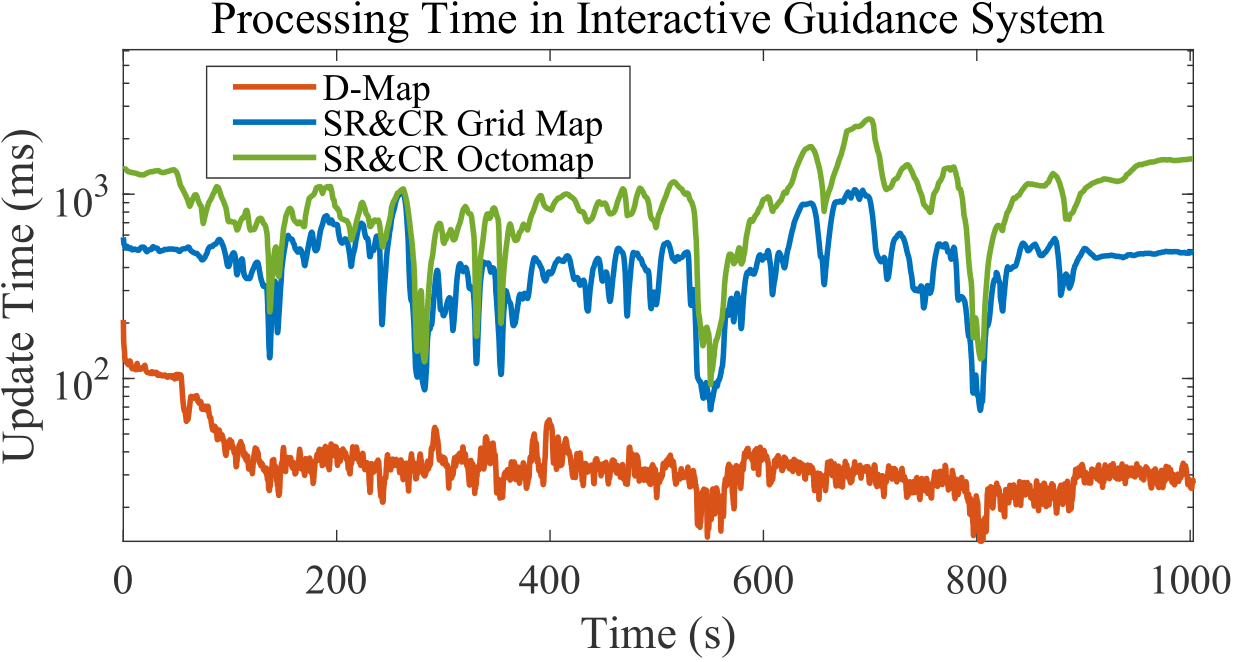}
	\caption{Comparison of the update time among our D-Map, SR\&CR(Grid), and SR\&CR(Octo). The update time of D-Map is acquired online in the interactive guidance system during the mapping process. The update time of the SR\&CR(Grid) and SR\&CR(Octo) is obtained by processing the recorded point clouds offline on the same computation platform.  }
	\label{fig:interactivetime} 
 \vspace{-0.4cm}
\end{figure} 

\begin{table}[t]
\centering
\caption{Occupancy Mapping Performance in Interactive Guidance System}
\label{tab:interact}
\setlength{\tabcolsep}{4.0pt}
    \begin{threeparttable}
    \begin{tabular}{@{}rccc@{}}
    \toprule
                    & Average Update Time & Processed Scan & Completion Rate\tnote{1} \\ \midrule
    D-Map & \SI{36.50}{\ms}     & 9980           & 99.46\%       \\
    SR\&CR(Grid) & \SI{348.58}{\ms}    & 2587           & 25.78\%       \\
    SR\&CR(Octo)  & \SI{721.79}{\ms}    & 1252           & 12.48\%       \\ \bottomrule
    \end{tabular}
    \begin{tablenotes}
    \footnotesize
    \item[1] The completion rate is calculated as the ratio of the processed scan over the total scan. In this experiment, LiDAR generated $10034$ scans of point clouds in total.
    \end{tablenotes}      
    \end{threeparttable}
   \vspace{-0.4cm}
\end{table}

As Fig.~\ref{fig:twoptcloudmap}(a) illustrates, the building reconstruction was completed with high fidelity. The mapping area was thoroughly scanned, except for an inaccessible classroom located in the middle of the map.

We conducted a performance evaluation of our D-Map against the super ray and culling region-based methods (i.e., SR\&CR(Grid) and SR\&CR(Octo)) on the recorded LiDAR data file that contains $10,034$ scans of LiDAR points in total. As shown in Fig.~\ref{fig:interactivetime} and Table~\ref{tab:interact}, D-Map achieved an average update time of only \SI{36.50}{\ms} to update the occupancy map, which is about $9.6$ times faster than SR\&CR(Grid) and $19.8$ times faster than SR\&CR(Octo). In addition, D-Map successfully processed the LiDAR data in real-time, except for the initial \SI{54}{\s} when the device was stationary. Consequently, D-Map processed \SI{99.46}{\percent} of the total scan, generating a high-resolution occupancy map with high fidelity. In contrast, SR\&CR(Grid) and SR\&CR(Octo) failed to process in real-time, processing only \SI{25.78}{\percent} and \SI{12.48}{\percent} of the total scan, respectively. As a result, they missed a significant amount of environment information required for interactive guidance.

\subsection{Autonomous UAV Exploration}
Unmanned aerial vehicles (UAVs) are becoming increasingly popular for autonomous exploration and scanning of real-world environments due to their unrestricted flight view and accessibility to hard-to-reach locations, such as caves \cite{uav_cave} and ancient remains \cite{uav_ancientremain}. However, the limited onboard computing power of UAVs presents a higher demand for efficient mapping modules compared to handheld devices. To address this demand, we have embedded our D-Map into a LiDAR-based UAV system, enabling it to autonomously explore complex scenes with higher-resolution mapping. This integration enables UAVs to achieve real-time mapping and guidance, thereby providing high-fidelity results even in challenging environments.

\subsubsection{Hardware System Setup}
 The UAV hardware platform includes an OS1-128 LiDAR, a Nora+ flight controller, and an onboard computer NUC 12 Pro Kit (i7-1260P CPU, maximum \SI{4.70}{\giga\hertz}, $12$-core, \SI{32}{\giga\byte} RAM), as shown in Fig. \ref{fig:coverfigure}(c). The extrinsic parameters between the LiDAR and the IMU on the flight controller are calibrated by LI-Init \cite{LIinit}. Additionally, we installed an action camera (DJI action 2) on the UAV to provide first-person view (FPV) images for better visualization. The small size of the UAV (\SI{40}{\cm}$\times$\SI{40}{\cm}$\times$\SI{21}{\cm}), combined with its large detection range and high scanning density, makes it well-suited for exploration and scanning tasks in complex scenes.

\begin{figure*}[t]

	\centering
	\setlength\abovecaptionskip{-0.1\baselineskip} 
	\includegraphics[width=\textwidth]{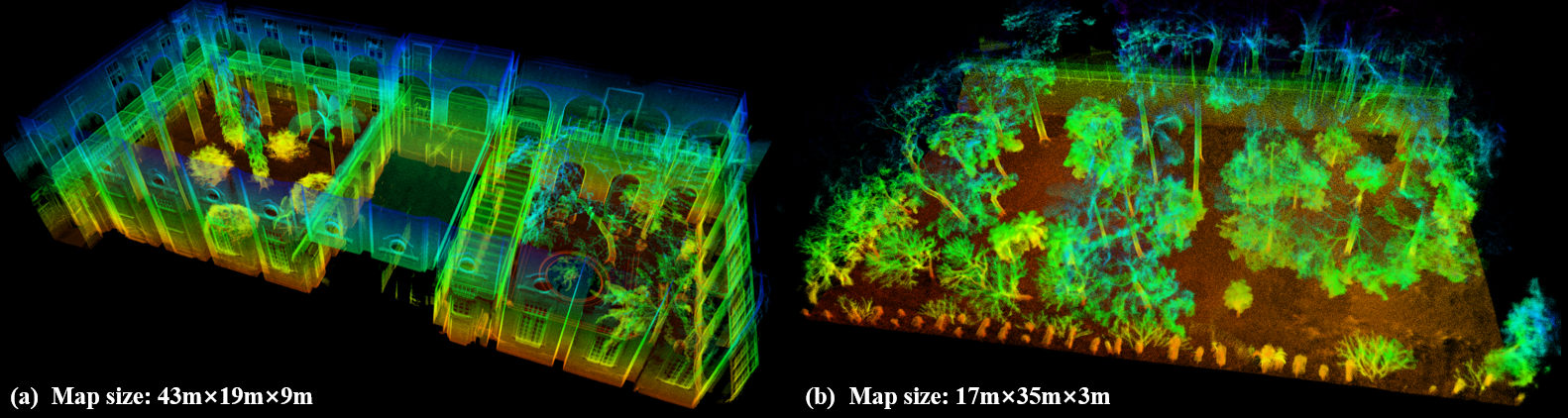}
	\caption{The high-fidelity point cloud map reconstructed from data collected by a high-resolution LiDAR. (a) Main Building in the University of Hong Kong, collected by a handheld device. (b) A forest in Hong Kong, autonomously collected by a UAV platform. }
	\label{fig:twoptcloudmap} 
 \vspace{-0.4cm}
\end{figure*}

\subsubsection{Software System Implementation} 
Multiple modules operate concurrently on the onboard computer to accomplish autonomous exploration tasks. The localization module employs FAST-LIO2\cite{FASTLIO2} to estimate the UAV's pose using data from the LiDAR and the IMU. This module generates undistorted point clouds, which are fed into the mapping module of the exploration planner. The mapping module utilizes D-Map for real-time occupancy mapping in place of the traditional ray-casting-based methods. The exploration planner, FUEL \cite{zhou2021fuel}, reads the information in the mapping module and calculates efficient, collision-free trajectories for exploration. These trajectories are tracked using an on-manifold model predictive controller\cite{mpc_luguozheng}. 

In this experiment, we used the same parameters for D-Map as in Section~\ref{subsec:handheld}. The maximum detection range of the LiDAR was set to \SI{15}{\meter} to ease the computation load in the exploration planner.

\subsubsection{Results}
The real-flight autonomous exploration and scanning experiments were conducted in two complex scenarios: an abandoned fortress site built about 100 years ago (\SI{20.5}{\meter}$\times$\SI{16}{\meter}$\times$\SI{6}{\meter}) and a natural forest (\SI{17}{\meter}$\times$\SI{35}{\meter}$\times$\SI{3}{\meter}). The entire exploration process is available on \href{https://youtu.be/m5QQPbkYYnA?t=78}{\tt youtu.be/m5QQPbkYYnA?t=78}. We successfully carried out three flight experiments in each scenario. 
The UAV completed high-precision scanning in these complex environments, and the average flight time for autonomous exploration and scanning by the UAV was \SI{123}{\second} and \SI{163}{\second}, respectively. The acquired point clouds are shown in Fig. \ref{fig:coverfigure}(a) and Fig. \ref{fig:twoptcloudmap}(b), exhibiting high precision and completeness. 

We compared the running times of the mapping module employing D-Map and a uniform grid map utilizing ray-casting, which was the original method used in FUEL \cite{zhou2021fuel}, on the same computing platform. As depicted in Fig. \ref{fig:realflightrunningtime}, D-Map substantially reduced  the time required for the mapping module, thereby demonstrating its effectiveness and efficiency for real-time occupancy mapping on massive streaming point clouds of over two million per second.

\begin{figure}[t]
	\setlength\abovecaptionskip{-0.1\baselineskip}
	\centering
	\includegraphics[width=0.9\linewidth]{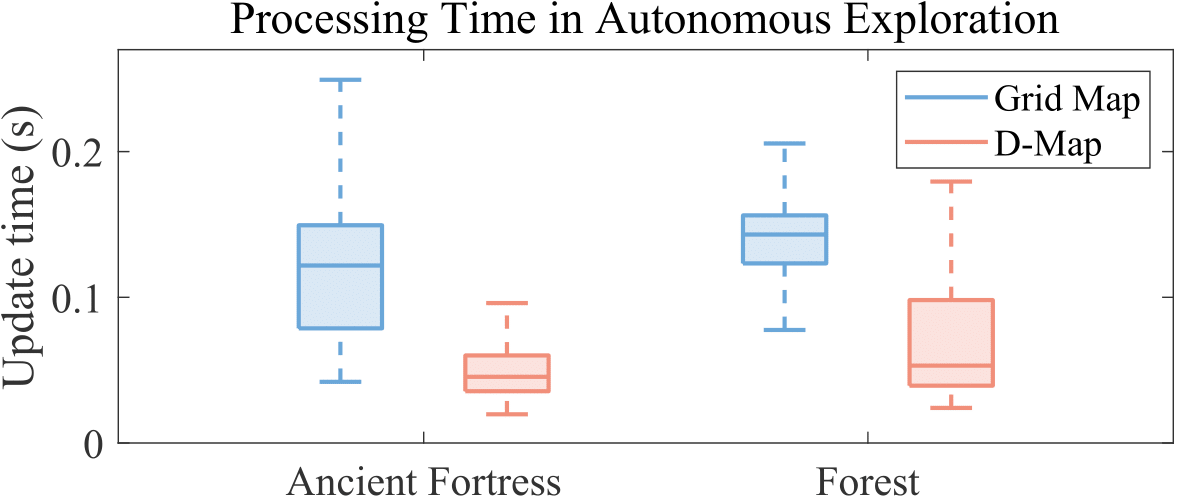}
	\caption{The comparison of processing time between a ray-casting-based grid map and our D-Map.}
	\label{fig:realflightrunningtime} 
 \vspace{-0.3cm}
\end{figure} 
\section{Extensions}
\subsection{Occupancy Mapping in Large-scale Environment}
\begin{table}[t]
	\setlength\abovecaptionskip{-0.1\baselineskip}
	\centering
	\caption{Comparison of Update Time (ms) at Different Resolutions on a Large-scale Dataset}
	\label{tab:ford_eff}
	\setlength{\tabcolsep}{2.6pt}
 \renewcommand*{\arraystretch}{1.0}
	\begin{threeparttable}
\begin{tabular}{@{}ccccccccc@{}}
\toprule
\multicolumn{1}{l}{} & \multicolumn{4}{c}{\textit{ford\_1}}                            & \multicolumn{4}{c}{\textit{ford\_2}}                            \\ \cmidrule(l){2-5}  \cmidrule(l){6-9}
Resolution [m]       & 1.0           & 0.5           & 0.25           & 0.15           & 1.0           & 0.5           & 0.25           & 0.15           \\ \midrule
GridMap             & 21.0          & 42.7          & 153.4          & $\,\,\,\times^1$              & 18.3          & 40.5          & 133.9          & $\times$              \\
Octomap              & 186.8         & 305.5         & 570.9          & 1312.6         & 176.5         & 289.3         & 533.1          & 966.0          \\
SR\&CR(Grid)      & 24.7          & 64.1          & 233.2          & $\times$              & 22.9          & 58.9          & 208.2          & $\times$              \\
SR\&CR(Octo)       & 34.8          & 105.4         & 373.1          & 1197.9         & 30.2          & 103.6         & 391.8          & $\times$              \\
Ours                 & \textbf{19.1} & \textbf{41.1} & \textbf{121.0} & \textbf{277.9} & \textbf{17.2} & \textbf{36.5} & \textbf{111.3} & \textbf{278.3} \\ \bottomrule
\end{tabular}
		\begin{tablenotes}
				\footnotesize
				\item[1]
				``$\times$'' denotes that this method failed due to exceeding memory limitation (i.e., \SI{64}{\giga\byte} RAM + \SI{64}{\giga\byte} swap memory).
        \end{tablenotes}
	\end{threeparttable}	
\vspace{-0.3cm}
\end{table}
To evaluate the performance of our D-Map in large-scale occupancy mapping, we conduct an experiment using two sequences from Ford Multi-AV Seasonal Dataset\cite{agarwal2020ford}. These two sequences, originally namely ``2017-10-26-V2-Log1'' and ``2017-10-26-V2-Log2'', are referred to as \textit{ford\_1} and \textit{ford\_2}, respectively. The mapping areas for these sequences measure \SI{8090}{\meter}$\times$\SI{11494}{\meter}$\times$\SI{96}{\meter} and \SI{8107}{\meter}$\times$\SI{11659}{\meter}$\times$\SI{103}{\meter}. We compare our D-Map with four benchmark methods, as discussed in Section~\ref{sec:benchmark}. The corresponding results are presented in Table~\ref{tab:ford_eff}. The findings indicate that D-Map consistently outperforms the other methods across all resolutions in the context of large-scale applications. Notably, D-Map exhibits superior memory efficiency, enabling it to handle a resolution of \SI{0.15}{\meter} within such expansive mapping areas. In contrast, grid-based maps (i.e., GridMap and SR\&CR(Grid)) fail to meet the memory limitations imposed by the device. 
\subsection{Map Region Sliding for D-Map}
\begin{figure}[t]
	\setlength\abovecaptionskip{-0.1\baselineskip}
	\centering
	\includegraphics[width=\linewidth]{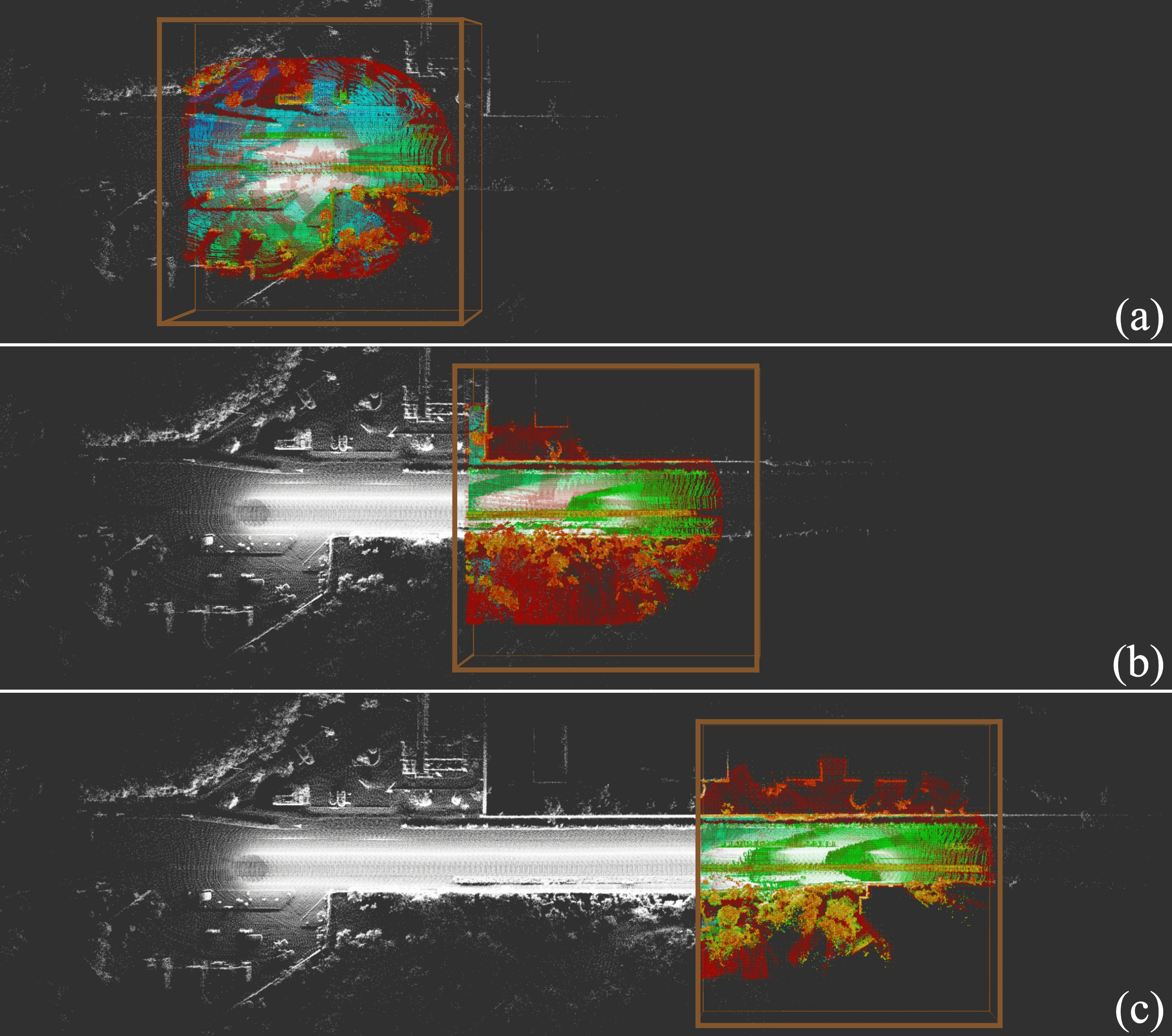}
	\caption{A demonstration of D-Map with map region sliding. Figures (a), (b), and (c) showcase the mapping process as the vehicle moves from left to right, with the mapping region sliding accordingly. The white point clouds represent the accumulated historical point clouds. The colored space within the mapping region of D-Map represents the occupancy information. An axis-aligned bounding box is employed to provide a visual representation of the mapping region, outlined by orange lines. }
	\label{fig:mapsliding} 
 \vspace{-0.2cm}
\end{figure}
In the context of occupancy mapping in even larger-scale environments, the issue of memory consumption becomes a significant concern. Therefore, we introduce a map region sliding technique in D-Map to address this challenge. This approach allows the removal of distant occupancy information from the current vehicle position. Fig.~\ref{fig:mapsliding} illustrates the concept of map region sliding, wherein the mapping region of D-Map slides alongside the vehicle. The sliding process adjusts the mapping region when the vehicle has moved beyond a distance threshold and removes information outside of the mapping region from the D-Map. By dynamically updating the mapping region, D-Map optimizes memory usage and only retains the relevant occupancy information. This mechanism enables our method to handle large-scale environments while mitigating the impact of memory constraints. 

\subsection{Extension to Range Sensors with Measurement Noise}
In D-Map, the on-tree update strategy leverages the high precision of depth measurements from LiDAR sensors to remove known cells on the octree directly. We further extend D-Map to range sensors with higher measurement noise (e.g., depth cameras) by incorporating occupancy probabilities. At each update, the occupancy probabilities on the correspondent nodes decrease for cells determined as known, while the occupancy probabilities in the corresponding cells of point clouds in the hashing grid map increase. The removal of D-Map is disabled to obtain correct occupancy probabilities. When querying the map, D-Map summarizes the occupancy probabilities from the hybrid map structure to determine occupancy states. This adaptation of occupancy probabilities appropriately handles measurement noise similar to other existing occupancy mapping approaches while retaining the superior efficiency of D-Map. We conduct several experiments to validate the performance of D-Map when incorporating occupancy probabilities.

\subsubsection{Qualitative Evaluation}
We conducted a qualitative comparison of the mapping results obtained from the original D-Map and D-Map with occupancy probability against those obtained from Octomap. To evaluate our approach, we used the \textit{pioneer\_slam3} sequence in the TUM dataset\cite{tum2012Sturm}, which was captured by a Kinect depth camera mounted on a Pioneer robot. The original D-Map used the same parameters as those described in Section~\ref{sec:benchmark}. For D-Map with occupancy probability and Octomap, the hit and miss probabilities by a ray for occupied and free space are set to $0.7$ and $0.4$, respectively, while the occupancy threshold for determining a cell as occupied is set to $0.9$. The mapping results are presented in Fig.~\ref{fig:TUM_map}. The results of the original D-Map and D-Map with occupancy probability (Fig.~\ref{fig:TUM_map}(a) and (b), respectively) highlight the effective handling of measurement noise from the depth camera. Furthermore, the mapping result obtained by D-Map with occupancy probability (Fig.~\ref{fig:TUM_map}(b)) exhibits few discrepancies from that produced by Octomap (Fig.~\ref{fig:TUM_map}(c)), indicating our accurate mapping performance.  
\subsubsection{Efficiency}
We conduct benchmark experiments to evaluate our efficiency on depth cameras. We compare the update time of our original D-Map and D-Map with occupancy probability against the four mapping methods in Section~\ref{sec:benchmark}, using four sequences in the TUM dataset \cite{tum2012Sturm}: \textit{pioneer\_360}, \textit{pioneer\_slam}, \textit{pioneer\_slam2}, and \textit{pioneer\_slam3}. The results are presented in Table~\ref{tab:tum_eff}, where the original D-Map and D-Map with occupancy probability are referred to as ``Ours'' and ``Ours(prob)'' respectively. Super Ray and Culling Region-based Grid Map (i.e., SR\&CR(Grid)) performs the best at resolutions of \SI{1.0}{\meter} and \SI{0.5}{\meter}. However, at high resolutions (i.e., $d<0.5\mathrm{m}$), our original D-Map and D-Map with occupancy probability have superior performance. On average, the original D-Map and D-Map with occupancy probability achieve approximately $9.1$ and $5.3$ times faster than the fastest competing method (i.e., GridMap) at the resolution of \SI{5}{\cm}, respectively. The update time of D-Map with occupancy probability is comparable to that of the original D-Map at low resolutions (i.e., $d\geqslant0.25\mathrm{m}$). However, at high resolutions (i.e., $d\leqslant0.1\mathrm{m}$), D-Map with occupancy probability is slower than the original D-Map since a large number of cells are kept on the octree to maintain occupancy probabilities.
\subsubsection{Accuracy}
In addition to the previous experiments, we conduct simulation experiments to evaluate the accuracy of D-Map with occupancy probability. The simulation environment has a dimension of \SI{30}{\meter}$\times$\SI{20}{\meter}$\times$\SI{4}{\meter}, as depicted in Fig.~\ref{fig:auroc_sim}(a). We evaluate the Area Under the Receiver Operating Characteristic (AUROC) curves of D-Map with occupancy probability, Grid Map, and Octomap by removing different fractions of observed data, with results provided in Fig.~\ref{fig:auroc_sim}(b). The results show that our D-Map with occupancy probability has a similar mapping performance as Octomap and Grid Map.

\begin{figure*}[t]
	\setlength\abovecaptionskip{-0.1\baselineskip}
	\centering
	\includegraphics[width=\textwidth]{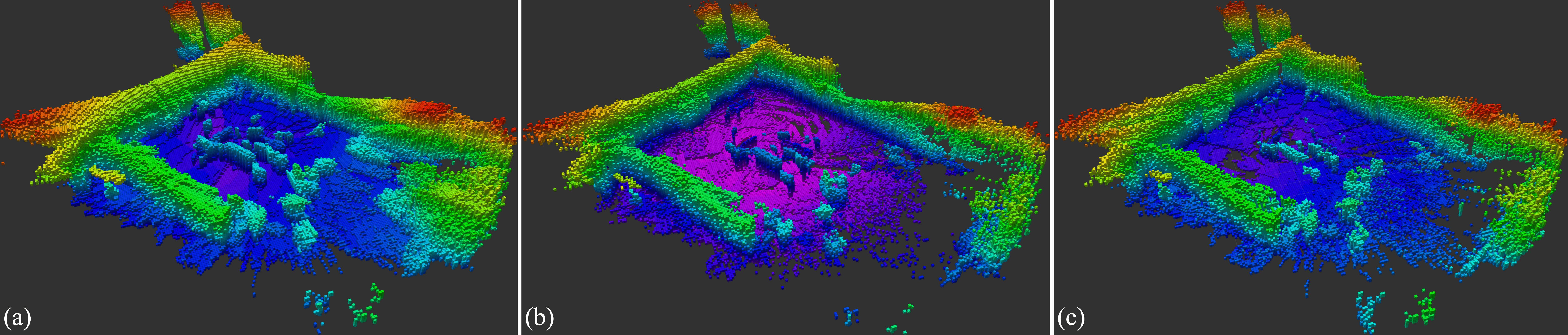}
	\caption{The mapping results of the sequence \textit{pioneer\_slam3} in TUM dataset. (a) The original D-Map (b) D-Map with occupancy probability. (c) Octomap}
	\label{fig:TUM_map} 
 \vspace{-0.3cm} 
\end{figure*} 
\begin{table*}[t]
	\setlength\abovecaptionskip{-0.1\baselineskip}
	\centering
	\caption{Comparison of Update Time (ms) at Different Resolution on TUM dataset}
	\label{tab:tum_eff}
	\setlength{\tabcolsep}{2.5pt}
 \renewcommand*{\arraystretch}{0.9}
	\begin{threeparttable}
\begin{tabular}{@{}ccccccccccccccccccccc@{}}
\toprule
               & \multicolumn{5}{c}{\textit{pioneer\_360}}                                    & \multicolumn{5}{c}{\textit{pioneer\_slam}}                                   & \multicolumn{5}{c}{\textit{pioneer\_slam2}}                                  & \multicolumn{5}{c}{\textit{pioneer\_slam3}}                                 \\ \cmidrule(l){2-6}  \cmidrule(l){7-11} \cmidrule(l){12-16}  \cmidrule(l){17-21}
Resolution [m] & 1.0          & 0.5           & 0.25          & 0.1           & 0.05          & 1.0          & 0.5           & 0.25          & 0.1           & 0.05          & 1.0          & 0.5           & 0.25          & 0.1           & 0.05          & 1.0          & 0.5          & 0.25          & 0.1           & 0.05          \\ \midrule
GridMap          & 29.7         & 48.3          & 85.1          & 195.8         & 383.2         & 30.4         & 47.1          & 83.1          & 186.7         & 377.9         & 32.7         & 51.8          & 90.9          & 211.5         & 408.0         & 28.5         & 43.2         & 76.4          & 173.0         & 334.3         \\
Octomap        & 93.3         & 154.0         & 273.3         & 611.9         & 1138.7        & 91.3         & 148.9         & 262.8         & 582.1         & 1135.2        & 106.5        & 179.4         & 318.4         & 699.7         & 1237.6        & 83.7         & 138.5        & 246.5         & 542.2         & 1001.4        \\
SR\&CR(Grid)    & 6.8          & \textbf{10.2} & 34.7          & 180.5         & 482.9         & \textbf{7.0} & 11.0          & 35.2          & 169.8         & 472.7         & \textbf{7.5} & 11.8          & 38.3          & 202.7         & 532.5         & 6.4          & 9.5          & 31.2          & 159.4         & 436.6         \\
SR\&CR(Octo)     & \textbf{6.7} & 10.3          & 31.1          & 192.7         & 485.9         & 7.1          & \textbf{10.5} & 31.3          & 184.7         & 514.7         & 7.6          & \textbf{11.3} & 35.2          & 213.1         & 508.0         & \textbf{6.2} & \textbf{9.2} & 28.0          & 172.4         & 464.6         \\
Ours(prob)    & 26.3         & 25.7          & \textbf{27.5} & 37.6          & 73.9          & 29.0         & 27.7          & \textbf{29.5} & 38.5          & 70.3          & 28.6         & 29.2          & \textbf{30.2} & 39.6          & 70.8          & 25.3         & 26.1         & \textbf{27.3} & 36.0          & 66.5          \\
Ours           & 27.0         & 26.8          & 27.8          & \textbf{31.5} & \textbf{41.2} & 30.5         & 28.9          & 29.9          & \textbf{33.2} & \textbf{40.9} & 30.3         & 30.4          & 30.7          & \textbf{34.5} & \textbf{43.2} & 26.8         & 27.3         & 27.7          & \textbf{31.0} & \textbf{38.9} \\ \bottomrule
\end{tabular}
	\end{threeparttable}	
 \vspace{-0.3cm}
\end{table*}
\begin{figure}[t]
	\setlength\abovecaptionskip{-0.1\baselineskip}
	\centering
	\includegraphics[width=\linewidth]{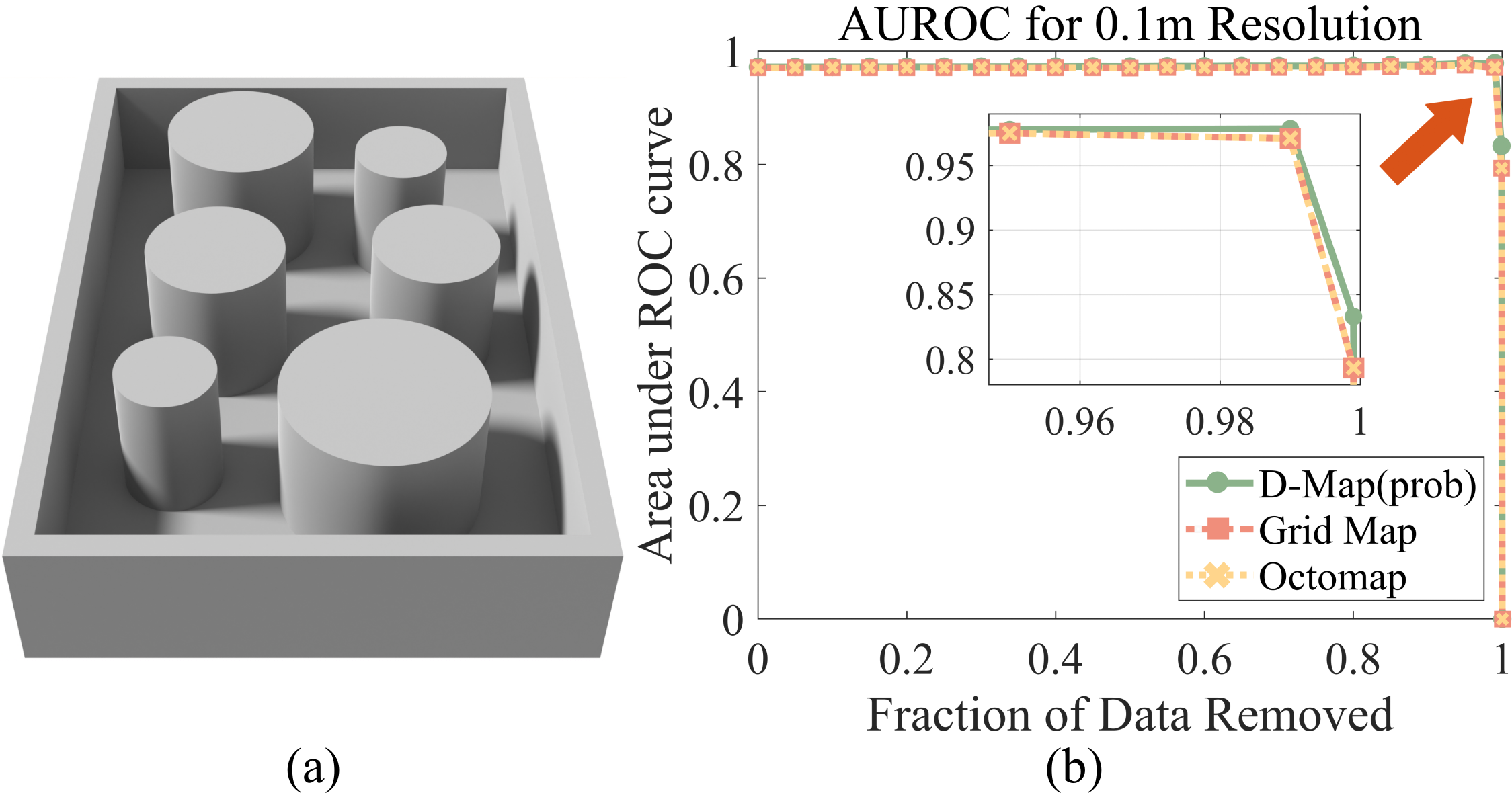}
	\caption{(a) The simulation environment for accuracy evaluation. (b) The AUROC curves of D-Map with occupancy probability, Grid Map, and Octomap for different fractions of removed data. }
	\label{fig:auroc_sim} 
 \vspace{-0.4cm}
\end{figure} 
\section{Discussion}\label{sec:discussion}
In this section, we first discuss occupancy mapping using a depth image in terms of efficiency and accuracy, followed by the discussion on parallel processing over D-Map.
\subsection{Occupancy Mapping on Depth Image}
\subsubsection{Efficiency}
ray-casting is an indispensable component in the existing occupancy mapping framework for occupancy updates. However, the computational demands of ray-casting increase with the number of point clouds and the longer detection range of high-resolution LiDARs, which makes it unsuitable for computationally limited robotic applications. To alleviate the increasing computation load in occupancy mapping, we propose an alternative approach that leverages a depth image created from point clouds to determine occupancy states. Moreover, we design a novel on-tree occupancy update strategy that exploits the hierarchical structure of the octree to determine the occupancy state of larger cells, avoiding the need for timely updates on smaller cells as required by ray-casting-based methods. Additionally, we remove known cells from the octree, reducing the map size and further lowering the update cost. The significant reduction in both the number of cells to be updated and the cost of map update renders substantially high efficiency in D-Map, as verified in the benchmark experiments in Section~\ref{sec:benchmark}. 

\subsubsection{Accuracy}
The accuracy of D-Map can be affected by two primary factors: depth map rasterization and occupancy state determination, which are both related to the depth map resolution. 

In the rasterization process, using a low-resolution depth image might result in the loss of depth measurements since a pixel only keeps one point with the smallest depth, as described in Section~\ref{subsec:proj}. In the occupancy state determination process, D-Map determines occupancy states in spherical coordinates on the depth image rather than Cartesian coordinates. Several design choices affect the accuracy of D-Map, including 1) projecting cells onto a depth image using their circumsphere radius, which can result in information loss at the corners of the cells; 2) discretizing the projected area into pixels on a depth image, which can result in a distorted shape in 3D space; and 3) using an occupancy state determination method that allows for early determination of large regions if the observation completeness $\alpha$ surpasses a threshold $\varepsilon$, which may cause errors in the unobserved regions. 

Despite a slight decrease in accuracy, D-Map provides comparable accuracy with the existing mapping approaches while achieving higher efficiency thanks to the comprehensive analysis and appropriate selection of depth image resolution.   

\subsection{Parallel Processing over D-Map}
Although the primary focus during the design of D-Map does not prioritize parallel processing, certain aspects of the update process in D-Map lend themselves well to parallel implementation. Specifically, 1) The projection of a point cloud onto a depth image can be parallelized at the pixel level. 2) The update process of the hashing grid map naturally lends itself to parallel processing, allowing concurrent updates to different grid cells. However, it is worth noting that the octree structure employed in D-Map poses challenges for parallelization. Parallelizing the octree's operations is not straightforward due to its hierarchical nature. As an alternative, we suggest utilizing a B-tree\cite{gpubtree} to exploit parallel processing on efficient management of 3D data within D-Map.
\section{Conclusion}\label{sec:conclusion}
This paper proposes a novel framework for occupancy mapping termed D-Map, which aims to provide efficient occupancy updates for high-resolution LiDAR sensors. Our proposed framework consists of three key techniques. Firstly, a method has been proposed to determine the occupancy state of a cell at arbitrary size through depth image projection. Secondly, a hybrid map structure has been developed with an efficient on-tree update strategy. Thirdly, a removal strategy has been introduced, which utilizes the low false alarm rate of LiDARs to remove known cells from the map. These techniques work in conjunction to reduce the number of cells that need updating and lower the cost of map updates, resulting in a significant improvement in efficiency.

To validate our proposed framework, we provide theoretical analyses of its accuracy and efficiency and conduct extensive benchmark experiments on various LiDAR datasets. The results show that D-Map substantially improves efficiency against other state-of-the-art mapping methods while maintaining comparable accuracy and high memory efficiency. Two real-world applications were demonstrated to showcase the effectiveness and efficiency of D-Map for high-resolution LiDAR-based applications. 

In the future, we could extend our D-Map framework to support occupancy mapping in dynamic environments and exploit parallel processing. 

\section*{Acknowledgement}
The authors would like to thank DJI Innovation Technology for the equipment support during the whole work. They would like to thank Ms. Yuhan Xie for the helpful discussions and Prof. Ximin Lyu for the support of the experiment site. They would also like to thank Mr. Wendi Dong for his help with the handheld equipment. 

\bibliography{main}

\includepdf[pages=-]{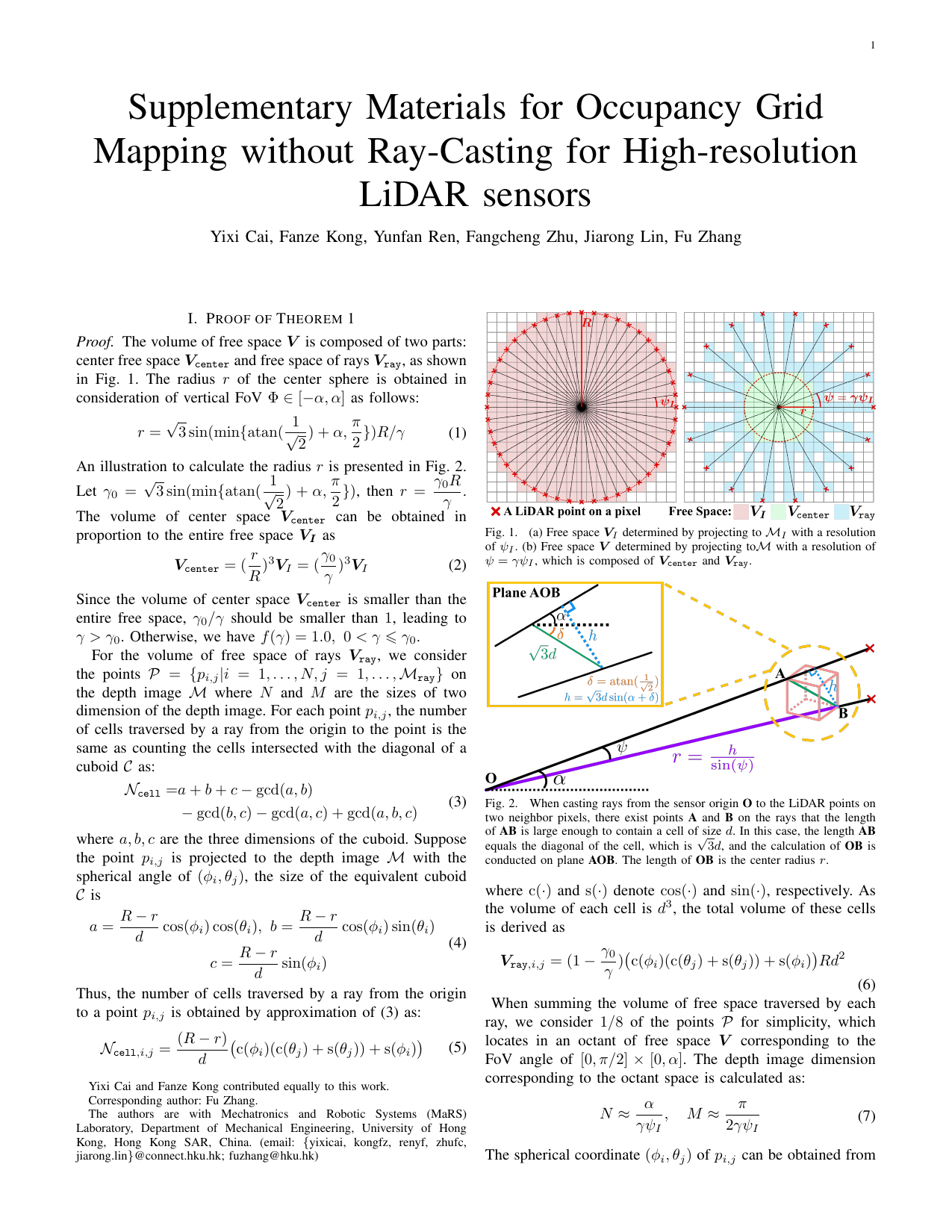}

\end{document}